\newtheorem{theorem}{Theorem}[section]
\newtheorem{proposition}[theorem]{Proposition}
\newtheorem{lemma}[theorem]{Lemma}
\newtheorem{corollary}[theorem]{Corollary}
\theoremstyle{definition}
\newtheorem{example}[theorem]{Example}
\newtheorem{remark}[theorem]{Remark}
\newcommand{\R}{\mathbb{R}}
\newcommand{\E}{\mathbb{E}}
\newcommand{\diag}{\textnormal{diag}}
\newcommand{\one}{\mathbbm{1}}
\newcommand{\tr}{\textnormal{tr}}
\newcommand{\PP}{\mathcal{P}}
\newcommand{\Cov}{\operatorname{Cov}}
\newcommand{\Var}{\operatorname{Var}}
\newcommand {\la} {\langle}
\newcommand {\ra} {\rangle}
\newcommand{\bracket}[1]{\left\la#1\right\ra}
\title{On Wasserstein distances for affine transformations of random vectors}
\begin{document}


\author{Keaton Hamm$^{1,2}$}
\address{$^1$Department of Mathematics, University of Texas at Arlington, Arlington, TX}
\address{$^2$Division of Data Science, College of Science, University of Texas at Arlington, Arlington, TX
}
\email{keaton.hamm@uta.edu}

\author{Andrzej Korzeniowski$^1$}\email{korzeniowski@uta.edu}


\maketitle

\begin{abstract}
    We expound on some known lower bounds of the quadratic Wasserstein distance between random vectors in $\R^n$ with an emphasis on affine transformations that have been used in manifold learning of data in Wasserstein space. In particular, we give concrete lower bounds for rotated copies of random vectors in $\R^2$ by computing the Bures metric between the covariance matrices. We also derive upper bounds for compositions of affine maps which yield a fruitful variety of diffeomorphisms applied to an initial data measure. We apply these bounds to various distributions including those lying on a 1-dimensional manifold in $\R^2$ and illustrate the quality of the bounds. Finally, we give a framework for mimicking handwritten digit or alphabet datasets that can be applied in a manifold learning framework.
\end{abstract}

\section{Introduction}

Use of optimal transport as a powerful tool in data analysis and machine learning has increased significantly in recent years. The power of the substantial theoretical underpinnings of optimal transport make it particularly appealing, as does its success in the various ways it has been deployed. Of primary importance is the fact that optimal transport gives an optimal (in terms of a prescribed cost function) way of transforming one data measure into another. Moreover, the associated Wasserstein distance defines a metric on the space of probability measures with an appropriate number of finite moments, which facilitates its use in machine learning or image processing contexts. While Euclidean data is abundantly common, oftentimes one can also view data as probability measures in some natural way (e.g., pixel images can be mapped to a uniform grid and assigned a discrete measure based on normalized intensity). 

One the many ways optimal transport can be used in machine learning is in manifold learning, which is a technique that assumes data lives on a low-dimensional manifold embedded in a high-dimensional space (in this case, the Wasserstein space). Understanding submanifolds of Wasserstein space is the subject of ongoing research \cite{cloninger2023linearized,hamm2023manifold,hamm2022wassmap,liu2022wasserstein,mathews2019molecular,negrini2023applications,wang2010optimal}. Toward this, several works have considered manifolds generated by translating, dilating, or rotating a single probability measure. Most of these techniques then require computing the Wasserstein distances between the original measure and its transformed copies.  

In easy cases like translation and dilation, one can explicitly compute these distances, but for rotations and more sophisticated transformations this is highly non-trivial. There are some known upper and lower bounds for the Wasserstein distance \cite{dowson1982frechet,gelbrich1990formula} and equality of the lower bound is known to hold for Gaussians \cite{givens1984class} and certain structured distributions (e.g., elliptically contoured distributions \cite{gelbrich1990formula}, see Section \ref{SEC:LowerBoundIntro}). Recent manifold learning results \cite{cloninger2023linearized,hamm2022wassmap,negrini2023applications} show that one can isometrically embed translation and dilation manifolds into Euclidean space. Rotational manifolds cannot in general embed isometrically into Euclidean space, partially due to the fact that the Wasserstein distance between rotated measures is not necessarily parametrized by the rotation in an obvious way \cite{negrini2023applications}. Nonetheless, some of the works above provide empirical evidence that such embeddings can still sometimes find the structure of a rotational manifold (e.g., embeddings are very close to a circle in $\R^2$).  These works built on earlier works utilizing dimensionality reduction methods in Wasserstein space, as well as linearized optimal transport (LOT) which approximates Wasserstein distances via a tangent plane approximation; see, for example \cite{kolouri2017optimal,merigot2020quantitative,wang2010optimal,wang2013linear}.

To better understand rotational submanifolds of Wasserstein space, we expound on the lower bounds in terms of the Bures metric. In particular, for $2$-dimensional rotations of random vectors, we give an explicit formula for the lower bound in terms of the means and variances of each component, their covariance, and the rotation angle. We illustrate the sharpness (or not) of the lower bound for rotations in cases when the lower bound is known not to be attained. In many cases, we see that it provides a good estimation of the actual Wasserstein distance.

Additionally, we consider compositions of translations, dilations, and rotations as a way to model more complex transformations of data, and provide an upper bound for some such transformations. Finally, we show how one can generate handwritten alphabet letters on which these transformations can be applied to give rise to a synthetic handwriting dataset that can be used for study of manifold learning algorithms in a controlled setting.


\subsection{Notation and definitions}

Let $\mu $ be a probability measure on ${\R^n}$, and let $y\in\R^n$ and $A\in\R^{n\times n}$ be fixed. Let $T:{\R^n} \to {\R^n}$ be an affine map given by 
\[T(x) = y+Ax, \quad x\in\R^n.\]
Then $T$ induces a probability measure as follows:
\[
    \mu_T(B):=\mu(T^{-1}(B)),\quad B\in\mathcal{B}(\R^n),
\]
where $\mathcal{B}(\R^n)$ is the Borel $\sigma$-algebra on $\R^n$. The measure $\mu_T$ is called the \textit{pushforward} of $\mu$ under $T$ and may be denoted $T_\#\mu$ (e.g., \cite{villani2003topics}).

Any probability measure $\mu$ can be viewed as the distribution of a random vector defined on some probability space $(\Omega,\Phi,P)$. Namely, take $(\Omega,\Phi,P)=(\R^n,\mathcal{B}(\R^n),\mu)$ and consider
\[X:(\R^n,\mathcal{B}(\R^n),\mu) \to (\R^n,\mathcal{B}(\R^n),P_X)\]
defined by the identity map $X(x)=x$, where $P_X$ is the distribution of $X$ given by
\[P_X(B) = P\left(\omega\in\Omega \;|\; X(\omega)\in B\right) = \mu\left(x\in\R^n : X(x)\in B\right) = \mu(X^{-1}(B)) = \mu(B).\]
In this case, $\mu$ is often called the \textit{law} of the random vector $X$.

In other words, for any probability measure $\mu$ on $\R^n$, there exists a random vector $X\in \R^n$ whose probability distribution $P_X$ coincides with $\mu$. Consequently, one can describe measure transformation $\mu\mapsto\mu_T$ in terms of the distribution of the corresponding random vectors $X\mapsto T(X)$ by
\[
    P_X\mapsto P_{T(X)},\quad \textnormal{where}\quad P_{T(X)}(B) = P_X(T^{-1}(B)).
\]

Two salient properties of random vectors are their mean and covariance matrices defined by
\[m_X = \mathbb{E}[X],\quad \textnormal{and}\quad \Sigma_X = \mathbb{E}\left[(X-m_X)(X-m_X)^\top\right],\]
respectively. We say that a random vector $X\in\R^n$ with components $X_1,\dots,X_n$ has \textit{uncorrelated components} if $\E[X_iX_j]=0$ for all $i\neq j$. The variance of a component is $\Var[X_i]=\E[X_i^2]-\E[X_i]^2$.

In what follows, we express the quadratic Wasserstein distance between two measures $\mu$ and $\nu$ in terms of their corresponding random vectors $X$ and $Y$ via
\begin{equation}\label{EQN:W2RandomVariables}W_2(\mu,\nu):=W_2(X,Y):=\min\left\{(\E|X-Y|^2)^\frac12:X\sim\mu, Y\sim \nu\right\},\end{equation}
where $X\sim\mu$ means that the law of $X$ is $\mu$, and similarly for $Y\sim\nu$. Some basic properties are that $W_2$ defines a metric on the subset of $\PP(\R^n)$ of measures with finite second moment (which is called the Wasserstein space and denoted $W_2(\R^d)$), and is intimately related to optimal transport theory, which has found application in a vast array of fields including PDEs, signal processing, and machine learning, to name a very few. For seminal works on optimal transport theory, see \cite{villani2003topics,villani2008optimal}, and for more applied and computational treatments \cite{peyre2019computational,santambrogio2015optimal}.

\begin{proposition}\label{PROP:W2MinExp}
Given $\mu,\nu\in\PP(\R^n)$ with corresponding random vectors $X,Y\in\R^n$, we have
\begin{equation}\label{EQN:W2MinExp}
W_2(\mu,\nu)^2 = |m_X-m_Y|^2 + \min_{X\sim\mu,Y\sim\nu}\E|(X-m_X)-(Y-m_Y)|^2.\end{equation}
\end{proposition}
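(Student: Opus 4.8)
The plan is to expand the squared Euclidean norm in terms of the centered random vectors $X-m_X$ and $Y-m_Y$ and to exploit the fact that their means vanish by construction. Writing $c:=m_X-m_Y\in\R^n$, which is a \emph{fixed} vector depending only on the marginals $\mu$ and $\nu$, I would begin from the pointwise decomposition
\[
X-Y=(X-m_X)-(Y-m_Y)+c,
\]
valid on the underlying probability space for any coupling of $X$ and $Y$.

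Squaring and expanding the inner product gives
\[
|X-Y|^2=|(X-m_X)-(Y-m_Y)|^2+2\bracket{(X-m_X)-(Y-m_Y),\,c}+|c|^2.
\]
Taking expectations, the cross term becomes $2\bracket{\E[X-m_X]-\E[Y-m_Y],\,c}$, which is $0$ since $\E[X]=m_X$ and $\E[Y]=m_Y$ by definition of the means. Hence, for \emph{every} coupling $X\sim\mu$, $Y\sim\nu$,
\[
\E|X-Y|^2=\E|(X-m_X)-(Y-m_Y)|^2+|m_X-m_Y|^2.
\]

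To finish, I would take the minimum over all couplings on both sides, invoking the definition \eqref{EQN:W2RandomVariables} so that $W_2(\mu,\nu)^2=\min_{X\sim\mu,Y\sim\nu}\E|X-Y|^2$. The key observation is that $|m_X-m_Y|^2$ is a genuine constant: $m_X$ is determined entirely by $\mu$ and $m_Y$ entirely by $\nu$, so this quantity is unaffected by the choice of joint law. It therefore factors out of the minimization, leaving exactly \eqref{EQN:W2MinExp}.

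There is essentially no difficult step here; the only point deserving a moment's care is precisely this separation of the additive constant from the minimized term, since it relies on $m_X$ and $m_Y$ being pinned down by the marginals rather than by the coupling. One should also record the standard fact from optimal transport theory (e.g.\ \cite{villani2003topics}) that the minimum in \eqref{EQN:W2RandomVariables} is attained, so that each minimum appearing above is a genuine minimum and the manipulation is rigorous.
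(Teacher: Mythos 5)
Your proposal is correct and follows essentially the same route as the paper's own proof: expand $|X-Y|^2$ via the decomposition $(X-m_X)-(Y-m_Y)+(m_X-m_Y)$, observe the cross term vanishes since the centered vectors have zero mean, and minimize over couplings noting that $|m_X-m_Y|^2$ is fixed by the marginals. Your added remark that the minimum in \eqref{EQN:W2RandomVariables} is attained is a small point of extra care the paper leaves implicit, but the argument is the same.
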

\begin{proof}
    Note that
    \begin{align*}\E|X - Y|^2 & = \E|(X - m_X) - (Y - m_Y) + (m_X - m_Y)|^2\\
    & = \E|m_X-m_Y|^2 + \E|(X-m_X)-(Y-m_Y)|^2 + 2\E\left[\left((X-m_X)-(Y-m_Y)\right)(m_X-m_Y)\right]\\
    & = |m_X-m_Y|^2 + \E|(X-m_X)-(Y-m_Y)|^2 + 2(m_X-m_Y)\left(\E[X-m_X]-\E[Y-m_y]\right)\\
    & = |m_X-m_Y|^2 + \E|(X-m_X)-(Y-m_Y)|^2,
    \end{align*}
    where the final equality comes from the fact that $\E[X-m_X]=\E[Y-m_Y]=0$. The result follows from putting the right hand side above into \eqref{EQN:W2RandomVariables}.
\end{proof}

The utility of the above proposition is that computing the quadratic Wasserstein distance between two random vectors requires only computing the minimizer of the second term in \eqref{EQN:W2MinExp}. 

\subsection{Lower bounds for the Wasserstein distance}\label{SEC:LowerBoundIntro}

Proposition \ref{PROP:W2MinExp} implies that to better understand the Wasserstein distance, one needs to understand $\E|X-Y|^2$ for zero-mean $X$ and $Y$.  To that end, we invoke the following result of Dowson and Landau; note that they call $W_2$ the Fr\'{e}chet distance, which appears to be more widely used in probability theory.

\begin{theorem}[\cite{dowson1982frechet}]\label{THM:FrechetBounds} Let $X,Y$ be random vectors in $\R^n$ having zero mean and covariance matrices $\Sigma _X$ and $\Sigma _Y$, respectively. Then 
\[ \E|X - Y|^2 \geq \tr[\Sigma _X + \Sigma _Y - 2(\Sigma _X \Sigma _Y)^\frac12]\] 
where the square roots are the positive roots. Moreover, the bound is attained when $X-Y$ has covariance matrix 
\[\Sigma_{X - Y} = \Sigma _X + \Sigma _Y - [(\Sigma _X \Sigma _Y)^\frac12 + (\Sigma _Y \Sigma _X)^\frac12].\] If $\Sigma_X$ is non-singular, then this occurs when $X$ and $Y$ are related by \[Y =  \Sigma _X^{-1}(\Sigma _X \Sigma _Y)^\frac12 X.\] 
\end{theorem}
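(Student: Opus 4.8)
The plan is to argue entirely at the level of second moments, reducing the statement to a semidefinite trace-maximization problem. First I would expand the objective: since $X$ and $Y$ have zero mean, writing $C:=\E[YX^\top]$ for the cross-covariance gives
\[
\E|X-Y|^2 = \E|X|^2 - 2\,\E[X^\top Y] + \E|Y|^2 = \tr(\Sigma_X)+\tr(\Sigma_Y)-2\tr(C),
\]
using $\E[X^\top Y]=\tr(\E[YX^\top])=\tr(C)$. Because the two marginal covariances are fixed, minimizing $\E|X-Y|^2$ over couplings is equivalent to maximizing $\tr(C)$ over all admissible $C$, and the admissible $C$ are exactly those for which the joint second-moment matrix $\left(\begin{smallmatrix}\Sigma_X & C^\top\\ C & \Sigma_Y\end{smallmatrix}\right)$ is positive semidefinite: every coupling produces such a matrix, and conversely any such PSD matrix is realized by some (e.g.\ Gaussian) joint law. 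So the whole theorem reduces to maximizing $\tr(C)$ subject to this block constraint.

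To solve it I would assume first that $\Sigma_X,\Sigma_Y\succ0$, treating the singular case afterward by the regularization $\Sigma_X\to\Sigma_X+\eps I$ and $\eps\to0$. By the Schur complement criterion the block matrix is PSD iff $\Sigma_Y-C\Sigma_X^{-1}C^\top\succeq0$, and a short computation shows this is equivalent to writing $C=\Sigma_Y^{1/2}K\Sigma_X^{1/2}$ for a contraction $K$ (that is, $\|K\|_{\mathrm{op}}\le1$). Then $\tr(C)=\tr\!\big(K\,\Sigma_X^{1/2}\Sigma_Y^{1/2}\big)$, and the crux is the elementary trace inequality $\tr(KM)\le\|M\|_*$ (the nuclear norm) for contractions $K$, obtained by inserting the SVD $M=U\Sigma V^\top$ and noting that the diagonal entries of the contraction $V^\top K U$ have modulus at most $1$. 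Taking $M=\Sigma_X^{1/2}\Sigma_Y^{1/2}$ yields
\[
\tr(C)\le\big\|\Sigma_X^{1/2}\Sigma_Y^{1/2}\big\|_*=\tr\big[(\Sigma_Y^{1/2}\Sigma_X\Sigma_Y^{1/2})^{1/2}\big]=\tr\big[(\Sigma_X\Sigma_Y)^{1/2}\big],
\]
the last equality because $\Sigma_Y^{1/2}\Sigma_X\Sigma_Y^{1/2}$ and $\Sigma_X\Sigma_Y$ are similar and hence share eigenvalues. Substituting back gives exactly $\E|X-Y|^2\ge\tr[\Sigma_X+\Sigma_Y-2(\Sigma_X\Sigma_Y)^{1/2}]$.

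For attainment (with $\Sigma_X$ nonsingular) I would exhibit the explicit coupling $Y=MX$ with $M:=\Sigma_X^{-1}(\Sigma_X\Sigma_Y)^{1/2}$. Using $(\Sigma_X\Sigma_Y)^{1/2}=\Sigma_X^{1/2}N^{1/2}\Sigma_X^{-1/2}$, where $N:=\Sigma_X^{1/2}\Sigma_Y\Sigma_X^{1/2}$, one sees $M=\Sigma_X^{-1/2}N^{1/2}\Sigma_X^{-1/2}$ is symmetric, so $M\Sigma_X M^\top=\Sigma_X^{-1/2}N\Sigma_X^{-1/2}=\Sigma_Y$ and $Y$ has the prescribed covariance. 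Its cross-covariance is $C=\E[YX^\top]=M\Sigma_X=(\Sigma_Y\Sigma_X)^{1/2}$, whence $C+C^\top=(\Sigma_X\Sigma_Y)^{1/2}+(\Sigma_Y\Sigma_X)^{1/2}$ and $\Sigma_{X-Y}=\Sigma_X+\Sigma_Y-[(\Sigma_X\Sigma_Y)^{1/2}+(\Sigma_Y\Sigma_X)^{1/2}]$, the stated covariance; moreover $\tr(C)=\tr[(\Sigma_X\Sigma_Y)^{1/2}]$, so the bound is met with equality.

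I expect the main obstacle to be the sharp bound itself, namely the passage from the PSD block constraint to the parametrization $C=\Sigma_Y^{1/2}K\Sigma_X^{1/2}$ together with the trace inequality $\tr(KM)\le\|M\|_*$; the rest is bookkeeping with square-root identities. The secondary technical point is the singular case, which should follow from continuity of the trace functional under $\Sigma_X\to\Sigma_X+\eps I$, with the caveat that the explicit attaining map is only claimed when $\Sigma_X$ is invertible, consistent with the statement.
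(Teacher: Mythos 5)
You should note at the outset that the paper offers no proof of this statement: it is quoted directly from Dowson and Landau \cite{dowson1982frechet}, so the only comparison available is with the original literature argument, which (as the paper remarks in Section \ref{SEC:LowerBoundIntro}) essentially rests on the trace-maximization result of Olkin and Pukelsheim \cite{olkin1982distance}. Your proof is correct and is a genuinely different, self-contained route. The reduction $\E|X-Y|^2=\tr(\Sigma_X)+\tr(\Sigma_Y)-2\tr(C)$ and the identification of feasible cross-covariances with positive semidefinite block completions are exactly right; the Schur complement step, the parametrization $C=\Sigma_Y^{1/2}K\Sigma_X^{1/2}$ with $\|K\|_{\mathrm{op}}\le1$, and the von Neumann-type inequality $\tr(KM)\le\|M\|_*$ applied to $M=\Sigma_X^{1/2}\Sigma_Y^{1/2}$ deliver the bound, with the final eigenvalue identity $\tr[(\Sigma_Y^{1/2}\Sigma_X\Sigma_Y^{1/2})^{1/2}]=\tr[(\Sigma_X\Sigma_Y)^{1/2}]$ being precisely the paper's Proposition \ref{PROP:TraceNormEquivalence}. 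Your attainment computation also checks out: $M=\Sigma_X^{-1}(\Sigma_X\Sigma_Y)^{1/2}=\Sigma_X^{-1/2}N^{1/2}\Sigma_X^{-1/2}$ is symmetric positive semidefinite, $M\Sigma_XM^\top=\Sigma_Y$, and $C=M\Sigma_X=(\Sigma_Y\Sigma_X)^{1/2}$, so $C+C^\top$ reproduces the stated $\Sigma_{X-Y}$ and $\tr(C)$ meets the bound with equality. Two small points deserve tightening: (i) the contraction parametrization as written also needs $\Sigma_Y\succ0$, so your $\eps$-regularization should perturb both covariances; this is harmless, since the original $C$ remains feasible for the diagonal blocks $\Sigma_X+\eps I$, $\Sigma_Y+\eps I$, and the eigenvalues of $(\Sigma_X+\eps I)(\Sigma_Y+\eps I)$ converge to those of $\Sigma_X\Sigma_Y$, giving continuity of $\tr[(\cdot)^{1/2}]$; (ii) since $\Sigma_X\Sigma_Y$ is not symmetric, you should make explicit in what sense its ``positive root'' is meant --- your own identity $(\Sigma_X\Sigma_Y)^{1/2}=\Sigma_X^{1/2}N^{1/2}\Sigma_X^{-1/2}$ with $N=\Sigma_X^{1/2}\Sigma_Y\Sigma_X^{1/2}$ already shows it is the unique square root with nonnegative spectrum, which is the intended convention. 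Compared with the Dowson--Landau/Olkin--Pukelsheim optimization over cross-covariances (the approach also underlying the Gaussian equality results of \cite{givens1984class}), your contraction/nuclear-norm argument is the streamlined modern proof of the Bures bound: it makes the extremal structure transparent at the cost of not characterizing \emph{all} maximizing couplings, whereas the original optimization machinery yields that fuller description.
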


Dowson and Landau proved an upper bound for $\E|X-Y|^2$ in which the $-$ is replaced with $+$ inside the trace formula. This is typically less useful as it is often far from a sharp bound as will be seen in the examples later in the paper. Most of our work is focused on illustrating the quality of the lower bound, hence we focus on in the theorem above. Note also that $B(X,Y):=\tr[\Sigma_X+\Sigma_Y-2(\Sigma_X\Sigma_Y)^\frac12]$ defines a metric (called the Bures metric \cite{bures1969extension,forrester2016relating}) on the space of covariance matrices, which is identical to the space of symmetric positive semidefinite matrices.

Now, combining Proposition \ref{PROP:W2MinExp} and Theorem \ref{THM:FrechetBounds} yields the following. 

\begin{corollary}\label{COR:W2TraceEquality}
Let $\mu,\nu\in\PP(\R^n)$ with corresponding random vectors $X, Y\in\R^n$ having means $m_X, m_Y$ and covariance matrices $\Sigma_X, \Sigma_Y$. Then,
\begin{equation}\label{EQN:W2MeanTraceNorm} W_2(\mu,\nu)^2 = W_2(X,Y)^2 \geq |m_X - m_Y|^2 + \tr[\Sigma _{X - {m_X}} + \Sigma _{Y - {m_Y}} - 2(\Sigma _{X - {m_X}} \Sigma _{Y - {m_Y}})^\frac12].\end{equation} 
\end{corollary}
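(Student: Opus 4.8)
The plan is to combine the two results already established in the excerpt directly. Corollary~\ref{COR:W2TraceEquality} is essentially a bookkeeping combination of Proposition~\ref{PROP:W2MinExp} and Theorem~\ref{THM:FrechetBounds}, so the proof should be short. First I would invoke Proposition~\ref{PROP:W2MinExp} to split off the contribution of the means, writing
\begin{equation*}
W_2(\mu,\nu)^2 = |m_X-m_Y|^2 + \min_{X\sim\mu,\,Y\sim\nu}\E|(X-m_X)-(Y-m_Y)|^2.
\end{equation*}
The point is that the minimization only involves the centered vectors, so I can reduce to the zero-mean setting covered by Theorem~\ref{THM:FrechetBounds}.

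Next I would introduce the centered random vectors $\widetilde X := X - m_X$ and $\widetilde Y := Y - m_Y$, and observe that these have zero mean with covariance matrices $\Sigma_{X-m_X} = \Sigma_X$ and $\Sigma_{Y-m_Y} = \Sigma_Y$ respectively (centering does not change the covariance). Crucially, as $X$ ranges over all couplings with law $\mu$, the centered vector $\widetilde X$ ranges over all zero-mean vectors with the prescribed covariance, and similarly for $\widetilde Y$; hence the minimization over $X\sim\mu,\,Y\sim\nu$ of $\E|\widetilde X - \widetilde Y|^2$ is bounded below by the infimum of the same quantity over all zero-mean pairs with those fixed covariances. Applying Theorem~\ref{THM:FrechetBounds} to $\widetilde X,\widetilde Y$ gives
\begin{equation*}
\E|\widetilde X-\widetilde Y|^2 \geq \tr\!\left[\Sigma_{X-m_X}+\Sigma_{Y-m_Y}-2(\Sigma_{X-m_X}\Sigma_{Y-m_Y})^{\frac12}\right],
\end{equation*}
and since this lower bound holds for every admissible coupling, it also bounds the minimum from below.

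Substituting this estimate back into the expression from Proposition~\ref{PROP:W2MinExp} yields exactly inequality~\eqref{EQN:W2MeanTraceNorm}, completing the argument. I do not anticipate a genuine obstacle here, since both ingredients are already in hand; the only point requiring mild care is the observation that Theorem~\ref{THM:FrechetBounds} provides a lower bound valid for \emph{every} coupling of the centered vectors having the given marginals, so that taking the minimum preserves the inequality rather than requiring the bound to be attained. The identity $W_2(\mu,\nu)=W_2(X,Y)$ in~\eqref{EQN:W2MeanTraceNorm} is simply the notational convention fixed in~\eqref{EQN:W2RandomVariables}.
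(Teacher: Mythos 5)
Your proposal is correct and is exactly the paper's argument: the paper offers no separate proof, stating only that the corollary follows by ``combining Proposition~\ref{PROP:W2MinExp} and Theorem~\ref{THM:FrechetBounds},'' which is precisely the decomposition-then-Bures-bound combination you carried out. Your added care that Theorem~\ref{THM:FrechetBounds} bounds $\E|\widetilde X-\widetilde Y|^2$ for \emph{every} coupling of the centered vectors, so the inequality survives taking the minimum, is the right (implicit) justification.
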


While Corollary \ref{COR:W2TraceEquality} follows from the result of Dowson and Landau \cite{dowson1982frechet}, Gelbrich \cite{gelbrich1990formula} proves a similar result and gives more conditions under which the lower bound is attained (see also the book of Peyr\'{e} and Cuturi \cite[Remarks 2.31 and 2.32]{peyre2019computational}). In particular, Gelbrich proves that the lower bound of Corollary \ref{COR:W2TraceEquality} is attained if 
\begin{equation}\label{EQN:Gelbrich}\bar\nu\circ P_\mu^{-1} = \bar\mu\circ (\Sigma_X^{-\frac12}(\Sigma_X^{-\frac12}(\Sigma_X^\frac12\Sigma_Y\Sigma_X^\frac12)^\frac12\Sigma_X^{-\frac12})^{-1},\end{equation} where $\bar\mu$ is the centered version of $\mu$ and $P_\mu$ is the orthogonal projection of $\R^n$ onto the image (column space) of $\Sigma_X$.  This condition can be difficult to check, and thus a further result of Gelbrich in the same paper proves that this condition is satisfied by so-called \textit{elliptically contoured distributions}. An absolutely continuous measure $\mu$ with density $f$ is in this family if, for any $x$, one has 
\[f(x) = \frac{1}{\sqrt{\det(M)}}h\left(\bracket{x-m_X,M^{-1}(x-m_X)}\right),\]
where $M$ is some positive semidefinite matrix and $h$ is a nonnegative function such that $\int_{\R^n}h(\bracket{x,x})dx=1$.  To satisfy the lower bound, $\nu$ must have a density of the form above for the same $h$ but possibly a different positive semidefinite matrix. The name here comes from the fact that Gaussian distributions and uniform measures on ellipsoids satisfy the conditions above.

Prior to Gelbrich's work, it was known that equality holds for the lower bound of Corollary \ref{COR:W2TraceEquality} when $\mu$ and $\nu$ are both Gaussian measures \cite{dowson1982frechet,givens1984class}. The arguments of these papers essentially use an optimization result of Olkin and Pukelsheim \cite{olkin1982distance}.

It should be noted that in most references other than \cite{dowson1982frechet}, the Bures metric on covariance matrices is expressed as $\tr[\Sigma_X+\Sigma_Y-2(\Sigma_X^\frac12 \Sigma_Y\Sigma_X^\frac12)^\frac12]$. We note that these are in fact the same.

\begin{proposition}\label{PROP:TraceNormEquivalence}
Let $A, B \in\R^{n\times n}$ be symmetric positive semi-definite matrices. Then 
\[\tr[(AB)^\frac12] = \tr[(A^\frac12 BA^\frac12)^\frac12].\]
\end{proposition}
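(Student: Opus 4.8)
The plan is to reduce the identity to a comparison of eigenvalues, exploiting the fact that although $AB$ need not be symmetric, it is similar to the symmetric matrix $A^{\frac12}BA^{\frac12}$. First I would treat the case in which $A$ is nonsingular, so that $A^{\frac12}$ is invertible. The key algebraic observation is the similarity
\[A^{-\frac12}(AB)A^{\frac12} = A^{-\frac12}A^{\frac12}A^{\frac12}BA^{\frac12} = A^{\frac12}BA^{\frac12},\]
which shows that $AB$ and $A^{\frac12}BA^{\frac12}$ are conjugate and hence have the same eigenvalues, counted with multiplicity. Since $A^{\frac12}BA^{\frac12}=(A^{\frac12})^\top B A^{\frac12}$ is symmetric positive semi-definite, these common eigenvalues $\lambda_1,\dots,\lambda_n$ are nonnegative, and in particular $AB$ is diagonalizable with a nonnegative spectrum.

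Next I would compare the two square roots. Both $(AB)^{\frac12}$ and $(A^{\frac12}BA^{\frac12})^{\frac12}$ are defined by taking the positive square roots of the eigenvalues in a diagonalizing basis; consequently each has spectrum $\{\sqrt{\lambda_1},\dots,\sqrt{\lambda_n}\}$. Because the trace of a matrix equals the sum of its eigenvalues, I would conclude
\[\tr[(AB)^{\frac12}] = \sum_{i=1}^n\sqrt{\lambda_i} = \tr[(A^{\frac12}BA^{\frac12})^{\frac12}],\]
which is the claim in the nonsingular case.

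Finally, to remove the invertibility hypothesis, I would apply the nonsingular case to $A_\eps := A+\eps I$, which is positive definite for every $\eps>0$, and let $\eps\to 0^+$. Since the eigenvalues of $A_\eps B$ depend continuously on $\eps$ and coincide with those of $A_\eps^{\frac12}BA_\eps^{\frac12}$, both traces are continuous in $\eps$ and agree for every $\eps>0$, so the identity persists in the limit. I expect the main obstacle to be the careful justification that $(AB)^{\frac12}$ is a well-defined object with the asserted spectrum: in the nonsingular case this rests on the diagonalizability furnished by the similarity above, while in the singular limit it requires continuity of the spectrum (equivalently, of the principal matrix square root on matrices with no negative eigenvalues). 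An alternative route that avoids invertibility altogether is to note that for any matrices $M,N$ the products $MN$ and $NM$ share the same characteristic polynomial; applying this with $M=A^{\frac12}$ and $N=BA^{\frac12}$ shows that $A^{\frac12}BA^{\frac12}$ and $BA$ --- and hence $AB$ --- have identical eigenvalues directly, after which the trace computation proceeds exactly as above.
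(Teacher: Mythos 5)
Your proof is correct, and your main argument takes a genuinely different route from the paper's, while the ``alternative route'' in your final sentence is precisely the paper's proof: the paper observes in one line that $AB=A^{\frac12}(A^{\frac12}B)$ and $(A^{\frac12}B)A^{\frac12}=A^{\frac12}BA^{\frac12}$ have the same eigenvalues (the fact that $CD$ and $DC$ share eigenvalues for arbitrary $C,D$), hence their square roots share eigenvalues, and the trace is the sum of the eigenvalues --- no invertibility assumption and no limiting argument ever enter. Your two-step route (similarity for nonsingular $A$, then $A_\eps=A+\eps I$ and $\eps\to0^+$) buys something the paper's one-liner does not: the conjugacy $A^{-\frac12}(AB)A^{\frac12}=A^{\frac12}BA^{\frac12}$ yields actual similarity, hence diagonalizability of $AB$ with nonnegative spectrum, which is exactly what makes the principal square root $(AB)^{\frac12}$ well-defined --- a point the paper leaves implicit. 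The price is the perturbation step, which carries the delicacy you yourself flag: the principal square root is not continuous in any naive sense at matrices with zero eigenvalues, and what actually rescues your limit is that only the (continuous) eigenvalues enter the traces; but that spectral observation already closes the singular case with no limit at all, which is why the eigenvalue-only route is shorter. For completeness, well-definedness of $(AB)^{\frac12}$ holds for singular $A$ too: the nonzero Jordan blocks of $AB$ agree with those of the symmetric matrix $A^{\frac12}BA^{\frac12}$ and are therefore trivial, while a nilpotent block of size at least $2$ at $0$ is impossible, since $(AB)^2x=0$ together with $ABx\in\operatorname{ran}(A)$ gives $(ABx)^\top B(ABx)=0$, hence $BABx=0$, hence $(Bx)^\top A(Bx)=0$, hence $ABx=0$; so the product of two positive semidefinite matrices is always diagonalizable with nonnegative eigenvalues, a fact neither the paper's proof nor your limiting argument makes explicit.
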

\begin{proof}
Note that $A^\frac12 BA^\frac12$ is symmetric, positive semidefinite and has the same eigenvalues as $AB=A^\frac12 A^\frac12 B$ (using the basic fact that $CD$ and $DC$ have the same eigenvalues for any matrices $C$ and $D$). Therefore, the square roots of these matrices have the same eigenvalues.  The conclusion follows from the fact that the trace of a square matrix is the sum of its eigenvalues.
\end{proof}

\section{Examples of Wasserstein distances between transformations of a fixed random vector}

We now undertake the task of explicitly computing the Wasserstein distance between a random vector and various simple transformations of it. The examples here are motivated by simple articulations that might arise in an image analysis setting. Depending on the task one wants to perform, one may want to mod out by the articulation (as in group invariant machine learning) or understand an efficient parametrization of it (as in manifold learning). 

The first two examples, in which we explicitly compute the Wasserstein distance between a random vector and its translates or its (anisotropic) dilations are well-known, but our purpose here is to illustrate how both terms of Proposition \ref{PROP:W2MinExp} contribute to the Wasserstein distance in these cases.

For translations and dilations of a fixed random vector, equality is achieved in \eqref{EQN:W2MeanTraceNorm} as will be seen below. However, computing the Wasserstein distance between $X$ and a rotated copy of $X$ is unexpectedly tricky. Indeed, Brenier's celebrated theorem \cite{brenier1991polar} implies that the rotation matrix cannot be the optimal transport map from $X$ to its rotated copy because it is not the gradient of a convex function.  In our third example, we compute the lower bound of Corollary \ref{COR:W2TraceEquality} explicitly for rotations in $\R^2$ for zero-mean random vectors, and illustrate in subsequent sections how close this bound is in some instances to the actual Wasserstein distance.

\subsection{Translations}

Given $\alpha\in\R^n$, define the translation operator via 
\[T_\alpha(x) = x+\alpha.\]

\begin{proposition}\label{PROP:TranslationW2}
Let $\mu\in\PP(\R^n)$ with corresponding random vector $X\in\R^n$, and let $\Theta\subset\R^n$ be a set of translation vectors with corresponding translation operators $\{T_\theta:\theta\in\Theta\}$. For any $\theta,\theta'\in\Theta$,
\[W_2(T_\theta X,T_{\theta'}X) = |\theta-\theta'|.\] Moreover, the lower bound of \eqref{EQN:W2MeanTraceNorm} is attained.
\end{proposition}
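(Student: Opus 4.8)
The plan is to compute both terms in the decomposition of Proposition~\ref{PROP:W2MinExp} directly, exploiting the fact that translation shifts the mean but leaves the covariance structure untouched. First I would record the elementary transformation laws for the mean and covariance under $T_\theta$. If $X$ has mean $m_X$ and covariance $\Sigma_X$, then $T_\theta X = X + \theta$ has mean $m_X + \theta$ and covariance $\Sigma_X$, since subtracting the mean cancels the deterministic shift: $(T_\theta X) - \E[T_\theta X] = X - m_X$. Consequently the centered versions of $T_\theta X$ and $T_{\theta'} X$ are \emph{identical} as random vectors, both equal to $X - m_X$.

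Next I would substitute into \eqref{EQN:W2MinExp}. The mean term gives $|m_X + \theta - (m_X + \theta')|^2 = |\theta - \theta'|^2$. For the minimization term, the key observation is that we may use the trivial coupling in which both centered vectors are taken to be the \emph{same} copy of $X - m_X$ on a common probability space; then $(X - m_X) - (Y - m_Y) = 0$ almost surely, so $\E|(X-m_X)-(Y-m_Y)|^2 = 0$. Since the expectation of a nonnegative quantity is bounded below by zero, this coupling is optimal and the minimum is exactly $0$. Combining the two terms yields $W_2(T_\theta X, T_{\theta'} X)^2 = |\theta - \theta'|^2$, and taking square roots gives the claimed equality.

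To establish that the lower bound of \eqref{EQN:W2MeanTraceNorm} is attained, I would simply observe that the covariance matrices of the centered vectors coincide, $\Sigma_{X - m_X} = \Sigma_{Y - m_Y} = \Sigma_X$, so the Bures term collapses: using that $(\Sigma_X \Sigma_X)^{1/2} = \Sigma_X$, the trace expression becomes $\tr[\Sigma_X + \Sigma_X - 2\Sigma_X] = 0$. Hence the right-hand side of \eqref{EQN:W2MeanTraceNorm} equals $|\theta - \theta'|^2$, matching the value just computed, so equality holds.

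There is essentially no hard part here; the only point requiring a modicum of care is the justification that the diagonal coupling $Y = X$ (in centered coordinates) is admissible and genuinely attains the infimum over all couplings, rather than merely providing an upper bound. This follows because the infimum is over nonnegative random variables and we have exhibited a coupling achieving the value $0$, which is the obvious lower bound. In fact one can alternatively cite Theorem~\ref{THM:FrechetBounds} directly: with $\Sigma_X$ nonsingular the attaining map is $Y = \Sigma_X^{-1}(\Sigma_X \Sigma_X)^{1/2} X = X$, which recovers the same identity coupling, so the argument is fully consistent with the general theory even when one does not invoke the explicit coupling by hand.
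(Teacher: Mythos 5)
Your proof is correct and follows essentially the same route as the paper: both compute via the decomposition of Proposition~\ref{PROP:W2MinExp}, use the identity coupling on the (identical) centered vectors $X - m_X$ to make the minimization term vanish, and verify attainment of the lower bound of \eqref{EQN:W2MeanTraceNorm} by noting the covariances are translation-invariant so the Bures term is zero. Your extra remarks—explicitly justifying that the diagonal coupling attains the infimum by nonnegativity, and cross-checking against the attaining map of Theorem~\ref{THM:FrechetBounds}—are details the paper leaves implicit, but they do not change the argument.
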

\begin{proof}
    Note that \[m_{T_\theta X} = \E[X+\theta] = \E[X]+\theta = m_X+\theta.\]
    Consequently, $\E|(X+\theta-m_{X+\theta})-(X+\theta'-m_{X+\theta'})|^2 = 0$, so $W_2(T_\theta X,T_{\theta'}X) = |m_{X}+\theta-(m_X+\theta')|=|\theta-\theta'|$ by Proposition \ref{PROP:W2MinExp}.

    To see that the lower bound of \eqref{EQN:W2MeanTraceNorm} is attained, note that
    \[\Sigma_{X+\theta-m_{X+\theta}} = \Sigma_{X+\theta-(m_X+\theta)} = \Sigma_{X-m_X}\]
    for any $\theta,$ hence the trace part of the lower bound vanishes and the conclusion follows.
\end{proof}

Note that one can also verify directly that translations of random vectors satisfy condition \eqref{EQN:Gelbrich}, which guarantees that the lower bound of \eqref{EQN:W2MeanTraceNorm} is attained.  In the case of translations, we see that the difference in means is the only one that contributes to the Wasserstein distance, whereas the trace term vanishes.

\subsection{Dilations}

Given $\lambda\in\R^n$ with no $0$ coordinates, define the scaling operator via
\[S_\lambda(x) = \diag(\lambda_1,\dots,\lambda_n)x.\]

We consider dilations of random vectors in $\R^n$ with uncorrelated components ($\E[X_iX_j]=0$ for all $i\neq j$).  The following proposition computes the Wasserstein distance between two scalings of a given random vector of this form. 

\begin{proposition}\label{PROP:DilationW2}
Suppose that $\mu\in\PP(\R^n)$ has associated random vector $X = (X_1,\dots,X_n)\in\R^n$ with uncorrelated components, and hence diagonal covariance matrix given by $\Sigma_{X-m_X} = \diag(\Var[X_1],\dots,\Var[X_n]).$ Let $\Lambda\subset\R^n$ be a set of scaling parameters such that $\lambda\in\Lambda$ satisfies $\lambda_i\neq0$ for all $i$, and let the associated scaling operators be $\{S_\lambda:\lambda\in\Lambda\}$. Then for any $\lambda,\lambda'\in\Lambda$,
\[W_2(S_\lambda X,S_{\lambda'}X)^2 = \sum_{i=1}^n \E[X_i^2]|\lambda_i-\lambda'_i|^2.\]
In particular, the lower bound of \eqref{EQN:W2MeanTraceNorm} is attained.
\end{proposition}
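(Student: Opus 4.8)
The plan is to apply Proposition \ref{PROP:W2MinExp}, which splits $W_2(S_\lambda X, S_{\lambda'}X)^2$ into a mean term $|m_{S_\lambda X} - m_{S_{\lambda'}X}|^2$ and a centered term $\min_{X\sim\mu,Y\sim\nu}\E|\cdots|^2$, and then to pin down the centered term by sandwiching it between a matching upper and lower bound. The upper bound will come from exhibiting one explicit coupling (scale the \emph{same} underlying $X$ by both $\lambda$ and $\lambda'$), and the lower bound will come from the Bures expression of Corollary \ref{COR:W2TraceEquality}, which is tractable because the relevant covariance matrices are diagonal by hypothesis.

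First I would record the means: since $S_\lambda X = \diag(\lambda)X$, linearity gives $m_{S_\lambda X} = \diag(\lambda)m_X$, so the mean term contributes $\sum_i m_{X_i}^2(\lambda_i - \lambda'_i)^2$. For the centered term, the $i$-th coordinate of $S_\lambda X - m_{S_\lambda X}$ is $\lambda_i(X_i - m_{X_i})$, so coupling the two laws through the common vector $X$ yields
\[\E|(S_\lambda X - m_{S_\lambda X}) - (S_{\lambda'}X - m_{S_{\lambda'}X})|^2 = \sum_i (\lambda_i - \lambda'_i)^2 \Var[X_i],\]
with no cross terms because the diagonal factor $\diag(\lambda-\lambda')$ keeps the squared norm coordinatewise. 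As this is the value of one admissible coupling, it upper bounds the minimum; adding the mean term and using $\E[X_i^2] = \Var[X_i] + m_{X_i}^2$ gives $W_2(S_\lambda X, S_{\lambda'}X)^2 \le \sum_i \E[X_i^2](\lambda_i - \lambda'_i)^2$.

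For the lower bound I would compute the centered covariances, $\Sigma_{S_\lambda X - m_{S_\lambda X}} = \diag(\lambda)\,\Sigma_{X-m_X}\,\diag(\lambda) = \diag(\lambda_i^2 \Var[X_i])$, and similarly with $\lambda'$, using that $\Sigma_{X-m_X}$ is diagonal. For diagonal positive semidefinite matrices the square root acts entrywise, so $\tr[A+B-2(AB)^\frac12] = \sum_i(\sqrt{a_i}-\sqrt{b_i})^2$; with $a_i = \lambda_i^2\Var[X_i]$ and $b_i = (\lambda'_i)^2\Var[X_i]$ this equals $\sum_i \Var[X_i](|\lambda_i| - |\lambda'_i|)^2$. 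Feeding this into Corollary \ref{COR:W2TraceEquality} and adding the mean term produces the lower bound $\sum_i m_{X_i}^2(\lambda_i - \lambda'_i)^2 + \sum_i \Var[X_i](|\lambda_i| - |\lambda'_i|)^2$.

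The main obstacle is reconciling the two bounds: the explicit coupling carries $(\lambda_i - \lambda'_i)^2$ in its variance part while the Bures bound carries $(|\lambda_i| - |\lambda'_i|)^2$. These agree exactly when $\lambda_i$ and $\lambda'_i$ share a sign in each coordinate, which is the natural regime for dilations (positive scalings); under that convention the upper and lower bounds coincide, the stated formula follows, and the lower bound of \eqref{EQN:W2MeanTraceNorm} is attained by the common-$X$ coupling. The sign hypothesis is genuinely necessary: a symmetric mean-zero $X_i$ with $\lambda_i = -\lambda'_i$ produces identically distributed copies, hence Wasserstein distance zero, whereas the coupling value $\sum_i\E[X_i^2](\lambda_i-\lambda'_i)^2$ is strictly positive, so in that case the natural coupling is no longer optimal and only the Bures expression gives the truth.
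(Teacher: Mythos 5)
Your argument follows the same skeleton as the paper's proof --- the mean/centered decomposition of Proposition \ref{PROP:W2MinExp}, the common-$X$ coupling giving $\sum_i(\lambda_i-\lambda_i')^2\Var[X_i]$ as an upper bound for the centered term, and the diagonal Bures computation as the lower bound --- but you carried out the square-root step correctly where the paper does not, and in doing so you exposed a genuine error in the paper. The paper evaluates the trace term as $\sum_i(\lambda_i^2+\lambda_i'^2-2\lambda_i\lambda_i')\Var[X_i]$, tacitly identifying the positive square root $\sqrt{\lambda_i^2\lambda_i'^2}\,\Var[X_i]$ with $\lambda_i\lambda_i'\Var[X_i]$; the correct value is $|\lambda_i\lambda_i'|\Var[X_i]$, so the Bures term is $\sum_i(|\lambda_i|-|\lambda_i'|)^2\Var[X_i]$, exactly as in your computation. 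Consequently the paper's matching of the coupling value with the lower bound --- its only justification that the common-$X$ coupling is optimal --- is valid precisely when $\lambda_i\lambda_i'>0$ for every $i$ with $\Var[X_i]>0$, which is the sign hypothesis you isolate. This contradicts Remark \ref{REM:Dilation}'s assertion that the proposition covers negative scalings: the statement itself, not merely the proof, fails for mixed signs.

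Two refinements to your counterexample. First, to make it airtight under the stated hypotheses, either work in $n=1$, where uncorrelatedness is vacuous (take $X\sim\mathcal{U}[-\frac12,\frac12]$, $\lambda=1$, $\lambda'=-1$: the laws coincide, so $W_2=0$, while the formula claims $4\E[X^2]=\frac13$), or take independent components, since mere uncorrelatedness does not guarantee that flipping one coordinate's sign preserves the joint law. Second, symmetry is not what rescues the formula: for $X\sim\mathcal{U}[0,1]$ with $\lambda=1$, $\lambda'=-1$, the monotone map $x\mapsto x-1$ is optimal, giving $W_2^2=1$, strictly below the claimed $\frac43$ (here the Bures-plus-means bound equals $1$ and is attained). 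One caution on your closing sentence: in general mixed-sign cases the Bures expression remains only a lower bound and need not give the exact distance; the honest conclusion is that the proposition is true under your sign restriction and false without it.
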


\begin{proof}

Since $m_{S_\lambda X}=\E[S_\lambda X] = (\lambda_1\E[X_1],\dots,\lambda_n\E[X_n])$, we have
\[m_{S_\lambda X}-m_{S_{\lambda'}X} = ((\lambda_1-\lambda'_1)\E[X_1],\dots,(\lambda_n-\lambda'_n)\E[X_n]).\]
Thus,
\[|m_{S_\lambda X}-m_{S_{\lambda'}X}|^2 = \sum_{i=1}^n\E[X_i]^2|\lambda_i-\lambda'_i|^2.\]

Now we compute $\E[|(S_\lambda X - m_{S_\lambda X})-(S_{\lambda'}X-m_{S_{\lambda'}X})|^2]$ as follows.
\begin{align*}
\E[|(S_\lambda X - m_{S_\lambda X})-(S_{\lambda'}X-m_{S_{\lambda'}X})|^2] & = \E[|(\lambda_i-\lambda_i')X_i)_i - ((\lambda_i-\lambda_i')\E[X_i])_i|^2]\\
& = \E[|((\lambda_i-\lambda_i')(X_i-\E[X_i]))_i|^2]\\
& = \sum_{i=1}^n|\lambda_i-\lambda_i'|^2\E[|X_i-\E[X_i]|^2]\\
& = \sum_{i=1}^n|\lambda_i-\lambda_i'|^2(\E[X_i^2]-\E[X_i]^2).
\end{align*}
Consequently, \[W_2(S_\lambda X,S_{\lambda'}X)^2 = \sum_{i=1}^n (\E[X_i]^2+\E[X_i^2]-\E[X_i]^2)|\lambda_i-\lambda_i'|^2 = \sum_{i=1}^n\E[X_i^2]|\lambda_i-\lambda_i'|^2\]
as claimed.

Finally, to see that this is attained via the lower bound of \eqref{EQN:W2MeanTraceNorm}, note that one has $\Sigma_{S_\lambda X} = \diag(\lambda_1^2\Var[X_1],\dots,\lambda_n^2\Var[X_n]).$ Hence,
\begin{align*}\tr\left(\Sigma_{S_\lambda X} + \Sigma_{S_{\lambda'}X} - 2\left(\Sigma_{S_\lambda X}\Sigma_{S_{\lambda'}X}\right)^\frac12\right)
& = \sum_{i=1}^n \left(\lambda_i^2+\lambda_i'^2-2\lambda_i\lambda'_i\right)\Var[X_i]\\
& = \sum_{i=1}^n |\lambda_i-\lambda'_i|^2\Var[X_i]\\
& = \sum_{i=1}^n|\lambda_i-\lambda_i'|^2(\E[X_i^2]-\E[X_i]^2).
\end{align*}
\end{proof}

In contrast to the translation case, the trace portion of \eqref{EQN:W2MeanTraceNorm} plays a pivotal role in the Wasserstein distance for dilations, and actually cancels out the term involving the difference in means.

\begin{remark}\label{REM:Dilation}
    Proposition \ref{PROP:DilationW2} should be compared with \cite[Lemma 3.7]{hamm2022wassmap}. The result presented here is more general in one sense: it allows for negative scaling in any direction, which is not allowed in \cite{hamm2022wassmap}. However, Proposition \ref{PROP:DilationW2} is more limited in another sense: it requires the assumption that $X$ has uncorrelated components, which is not required in the result of \cite{hamm2022wassmap}. It is not obvious to the authors if the proof technique used here extends easily to arbitrary random vectors given the difficulty of computing $\tr[(\Sigma_{S_\lambda X}\Sigma_{S_{\lambda'}X})^\frac12].$

    Note that \cite[Lemma 3.7]{hamm2022wassmap} implies that if the scaling vector $\Lambda$ has strictly positive entries and $X$ is an arbitrary random vector (with possibly correlated components), then the conclusion of Proposition \ref{PROP:DilationW2} holds, i.e., $W_2(S_\lambda X,S_{\lambda'}X)^2 = \sum_{i=1}^n \E[X_i^2]|\lambda_i-\lambda_i'|^2$.
\end{remark}

\subsection{Rotations}

In this section, we consider the Wasserstein distance between probability measures under orthogonal transformations on $\R^n$, and give an explicit formula for the lower bound of \eqref{EQN:W2MeanTraceNorm} for rotations in $\R^2$.  To wit, consider a parametrized family of orthogonal transformations $\{Q_\theta:\theta\in\Theta\}\subset O(n)$.  For a random vector $X\in\R^n$ with mean $m_X$ and covariance matrix ${\Sigma_X}$ we have \[{m_{{Q_\theta }X}} = \E[Q_\theta X] = Q_\theta \E[X] = Q_\theta m_X,\] whence, by cyclic properties of trace, \[\tr(\Sigma_{{Q_\theta}X}) = \tr({Q_\theta }{\Sigma_X}Q_\theta ^\top) = \tr(Q_\theta ^\top{Q_\theta }{\Sigma _X}) = \tr({\Sigma _X}).\] Consequently, by Corollary \ref{COR:W2TraceEquality} and the above, we have, for any parameters $\theta,\varphi\in\Theta$, 
\begin{multline}\label{EQN:Orthogonal}
W_2(Q_\theta X,Q_{\varphi}X)^2 \geq |Q_\theta m_{X} - Q_\varphi m_{X}|^2 +\\ \tr[\Sigma_{X - {m_X}} + \Sigma_{X - m_X} - 2(\Sigma_{{Q_\theta X} - m_{Q_\theta X}}\Sigma_{{Q_\varphi X} - m_{Q_{\varphi X}}})^\frac12].\end{multline}  

\begin{remark}\label{REM:OrthogonalTrace} Since $Y$ and $Y+b$ have the same covariance matrix for any random vector $Y$ and constant vector $b$, the trace part in \eqref{EQN:Orthogonal} (the Bures metric) is \[\tr[2{\Sigma_X} - 2(\Sigma _{{Q_\theta X}}\Sigma _{Q_\varphi X})^\frac12].\]
The main challenge in computing this trace is computing the positive square root of
\[\Sigma _{{Q_\theta X}}{\Sigma _{Q_\varphi X}}.\]
\end{remark}

In what follows, we compute a closed form for the lower bound \eqref{EQN:Orthogonal} for rotations in ${\R^2}$. To compute the Bures metric, by Remark \ref{REM:OrthogonalTrace}, it suffices to consider a base probability measure corresponding to a zero-mean random vector $X\in\R^2$. That is, we assume 
\[\E[X] = \E \begin{bmatrix} X_1 \\ X_2\end{bmatrix} = \begin{bmatrix} \E[X_1] \\ \E[X_2]\end{bmatrix} = \begin{bmatrix} 0\\0\end{bmatrix}.\]

The family of rotations on $\R^2$ are 
\[ R_\theta = \begin{bmatrix} \cos\theta & -\sin\theta \\ \sin\theta & \cos\theta \end{bmatrix},\qquad \theta\in[0,2\pi),\]
which induce transformations of a random vector $X$ given by
\[R_\theta X = \begin{bmatrix} X_1\cos\theta -X_2\sin\theta \\ X_1\sin\theta + X_2\cos\theta\end{bmatrix}.\]

















To derive an explicit formula for the Bures metric in \eqref{EQN:Orthogonal} between $R_\theta X$ and $R_{\varphi}X$, we need several auxiliary facts which we collect below. 

\begin{proposition}\label{PROP:RotationTrace} Let $X = \begin{bmatrix}X_1\\ X_2\end{bmatrix}\in\R^2$ be a zero-mean random vector. Then the trace of the covariance matrix of the rotation of $X$ is $\theta$-invariant, and we have for any $\theta\in[0,2\pi)$, \begin{equation}\label{EQN:RotationTrace}\tr[{\Sigma _{{R_\theta }X}}] = \E[X_1^2] + \E[X_2^2].\end{equation}   
\end{proposition}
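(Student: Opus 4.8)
The plan is to compute the covariance matrix $\Sigma_{R_\theta X}$ directly from the definition and then read off its trace, which is the simplest and most transparent route given that we already know $m_{R_\theta X} = R_\theta m_X = 0$ for a zero-mean $X$. Since $X$ has mean zero, so does $R_\theta X$, and therefore $\Sigma_{R_\theta X} = \E[(R_\theta X)(R_\theta X)^\top] = R_\theta \E[X X^\top] R_\theta^\top = R_\theta \Sigma_X R_\theta^\top$, using that $R_\theta$ is a constant (nonrandom) matrix that can be pulled outside the expectation.

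From here there are two equally short finishes. The first is to invoke the cyclic property of the trace together with orthogonality of $R_\theta$: since $R_\theta^\top R_\theta = I$, we get $\tr[\Sigma_{R_\theta X}] = \tr[R_\theta \Sigma_X R_\theta^\top] = \tr[R_\theta^\top R_\theta \Sigma_X] = \tr[\Sigma_X] = \E[X_1^2] + \E[X_2^2]$, where the last equality uses that the diagonal entries of $\Sigma_X$ are $\Var[X_i] = \E[X_i^2]$ (again because the mean is zero). This is in fact exactly the computation already recorded in the paragraph preceding the proposition, specialized to $n=2$. The second, more hands-on finish would be to expand the two components of $R_\theta X$ explicitly as $X_1\cos\theta - X_2\sin\theta$ and $X_1\sin\theta + X_2\cos\theta$, square each, take expectations, and add; the cross terms involving $\E[X_1 X_2]$ carry a factor $\cos\theta\sin\theta - \sin\theta\cos\theta = 0$ and drop out, while the remaining terms reassemble via $\cos^2\theta + \sin^2\theta = 1$ into $\E[X_1^2] + \E[X_2^2]$.

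I would present the coordinate-free version as the main argument since it is shortest and makes the $\theta$-invariance manifest, and optionally remark that the explicit two-dimensional expansion confirms the cancellation of the covariance term. There is essentially no obstacle here: the only thing to be careful about is justifying that $R_\theta$ passes through the expectation (it is deterministic) and that the zero-mean hypothesis is what identifies the diagonal of $\Sigma_X$ with the second moments $\E[X_i^2]$ rather than the variances. The result is really a direct corollary of the orthogonal-invariance computation already displayed before equation \eqref{EQN:Orthogonal}, so the proof amounts to recording that specialization.
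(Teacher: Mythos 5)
Your proposal is correct, and your fallback observation is accurate: the coordinate-free argument is a direct specialization of the orthogonal-invariance computation the paper records before \eqref{EQN:Orthogonal}. However, your \emph{main} line of argument differs from the paper's own proof of this proposition. The paper does not reinvoke the cyclic-trace argument; instead it takes exactly what you call the ``hands-on finish'': it expands $R_\theta X$ in components, computes all entries of $\Sigma_{R_\theta X}$ explicitly, namely
\begin{align*}
 (\Sigma_{R_\theta X})_{11} & = \E[X_1^2]\cos^2\theta + \E[X_2^2]\sin^2\theta - 2\E[X_1X_2]\cos\theta\sin\theta,\\
 (\Sigma_{R_\theta X})_{22} & = \E[X_1^2]\sin^2\theta + \E[X_2^2]\cos^2\theta + 2\E[X_1X_2]\cos\theta\sin\theta,
\end{align*}
and reads the trace off the diagonal, with the $\E[X_1X_2]$ terms cancelling and $\cos^2\theta+\sin^2\theta=1$ doing the rest. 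Your version is shorter and makes the $\theta$-invariance manifest for any orthogonal transformation in any dimension, which is a genuine advantage as a standalone proof. What the paper's entrywise computation buys is reuse: the off-diagonal entry $(\Sigma_{R_\theta X})_{12}$ and the full matrix are precisely what is needed in the subsequent Proposition \ref{PROP:RotationTraceSquareRoot} to compute $\tr[\Sigma_{R_\theta X}\Sigma_{R_\varphi X}]$ and $\det(\Sigma_{R_\theta X}\Sigma_{R_\varphi X})$, so the explicit expansion is doing double duty as groundwork for the Bures-metric formula. Both justifications you flag (that the deterministic matrix $R_\theta$ passes through the expectation, and that zero mean identifies the diagonal of $\Sigma_X$ with second moments) are exactly the right points of care, and there is no gap in your argument.
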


\begin{proof}
Recall that
\begin{align*}{\Sigma _{{R_\theta }X}} & = \E
\begin{bmatrix} X_1\cos\theta -X_2\sin\theta \\ X_1\sin\theta + X_2\cos\theta\end{bmatrix} 
\begin{bmatrix} X_1\cos\theta  -X_2\sin\theta \\ X_1\sin\theta + X_2\cos\theta\end{bmatrix}^\top, 
\end{align*}
which has entries
\begin{align*}
 (\Sigma_{R_\theta X})_{11} & = \E[X_1^2]\cos^2\theta + \E[X_2^2]\sin^2\theta - 2\E[X_1X_2]\cos\theta\sin\theta\\
 (\Sigma_{R_\theta X})_{12} & = (\E[X_1^2]-\E[X_2^2])\cos\theta\sin\theta + \E[X_1X_2](\cos^2\theta-\sin^2\theta) \\
 (\Sigma_{R_\theta X})_{22} & = \E[X_1^2]\sin^2\theta + \E[X_2^2]\cos^2\theta + 2\E[X_1X_2]\cos\theta\sin\theta,
\end{align*}
and $(\Sigma_{R_\theta X})_{21}=(\Sigma_{R_\theta X})_{12}$.
The desired formula for the trace can be seen directly from above.

\end{proof}

\begin{lemma}[\cite{levinger1980square}]\label{LEM:SquareRoot}
Let $M\in\R^{2\times 2}$ be given by    
\[M = \begin{bmatrix} A & B\\ C & D\end{bmatrix}.\]
Then 
\[M^\frac12 = \frac{1}{\sqrt{\tr(M)+2\sqrt{\det(M)}}}\begin{bmatrix} A+\sqrt{\det(M)} & B\\ C & D+\sqrt{\det(M)}\end{bmatrix},\]
and $\tr[M^\frac12] = (\tr(M)+2\sqrt{\det(M)})^\frac12.$
\end{lemma}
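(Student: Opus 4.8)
The plan is to verify directly that the matrix on the right-hand side squares to $M$ and carries the correct (positive) spectrum, so that it must be the positive square root $M^\frac12$. Write $d := \sqrt{\det(M)}$ and $t := \tr(M) = A + D$, and observe that the proposed matrix is exactly $N := \frac{1}{\sqrt{t + 2d}}\,(M + dI)$, where $I$ is the $2\times 2$ identity: the diagonal entries $A+d$, $D+d$ together with the off-diagonal entries $B$, $C$ are precisely the entries of $M + dI$. The whole computation then reduces to understanding $(M + dI)^2$.

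First I would invoke the Cayley--Hamilton theorem for the $2\times 2$ matrix $M$, which gives $M^2 = t\,M - \det(M)\,I = tM - d^2 I$. Expanding,
\[(M + dI)^2 = M^2 + 2dM + d^2 I = (tM - d^2 I) + 2dM + d^2 I = (t + 2d)\,M.\]
Dividing by $t + 2d$ yields $N^2 = M$, which is the core identity. The trace claim is then immediate: since $\tr(M + dI) = A + D + 2d = t + 2d$, we get $\tr(N) = (t+2d)/\sqrt{t+2d} = \sqrt{t + 2d} = \sqrt{\tr(M) + 2\sqrt{\det(M)}}$.

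The step requiring care — the main obstacle — is confirming that $N$ is the \emph{positive} square root rather than merely one of the several matrices whose square is $M$. To settle this I would pass to eigenvalues: if $\lambda_1, \lambda_2 \ge 0$ denote the eigenvalues of $M$, then $d = \sqrt{\lambda_1\lambda_2}$ and $t + 2d = \lambda_1 + \lambda_2 + 2\sqrt{\lambda_1\lambda_2} = (\sqrt{\lambda_1} + \sqrt{\lambda_2})^2$. Since $N = (M + dI)/\sqrt{t+2d}$ is a polynomial in $M$, its eigenvalues are $(\lambda_i + d)/\sqrt{t+2d}$, and a short simplification gives $(\lambda_i + \sqrt{\lambda_1 \lambda_2})/(\sqrt{\lambda_1} + \sqrt{\lambda_2}) = \sqrt{\lambda_i}$. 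Thus $N$ has nonnegative eigenvalues $\sqrt{\lambda_1}, \sqrt{\lambda_2}$, identifying it as the positive root $M^\frac12$. I should also record the standing hypotheses implicit in the formula, namely $\det(M) \ge 0$ and $\tr(M) + 2\sqrt{\det(M)} > 0$, which hold in our application where $M$ is a product of two positive semidefinite matrices and hence has real nonnegative eigenvalues.
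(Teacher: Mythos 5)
Your proof is correct. Note that the paper offers no proof of this lemma at all---it is imported verbatim from Levinger \cite{levinger1980square} with a citation---so there is no internal argument to compare against; what you have written is essentially the derivation in the cited note, where the key identity also comes from the Cayley--Hamilton relation $M^2 = \tr(M)M - \det(M)I$, giving $(M + \sqrt{\det(M)}\,I)^2 = (\tr(M) + 2\sqrt{\det(M)})M$. Two points in your write-up go beyond what the statement makes explicit, and both are worth keeping: first, the standing hypotheses $\det(M)\ge 0$ and $\tr(M)+2\sqrt{\det(M)}>0$, which the displayed formula silently requires and which do hold in the paper's application, where $M = \Sigma_{R_\theta X}\Sigma_{R_\varphi X}$ is a product of positive semidefinite matrices and hence has real nonnegative eigenvalues; second, the eigenvalue simplification $(\lambda_i + \sqrt{\lambda_1\lambda_2})/(\sqrt{\lambda_1}+\sqrt{\lambda_2}) = \sqrt{\lambda_i}$, which identifies $N$ as the root with nonnegative spectrum---exactly the ``positive root'' convention invoked in Theorem \ref{THM:FrechetBounds} and used in Proposition \ref{PROP:RotationTraceSquareRoot}. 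One small caveat you could flag: since $M$ is generally not symmetric here, ``positive square root'' should be read as the principal square root (nonnegative eigenvalues, a polynomial in $M$), which is precisely what your construction produces; and because the Cayley--Hamilton computation never uses diagonalizability, the formula even survives the defective case of a repeated positive eigenvalue, where the single eigenvalue of $N$ is still $(\lambda + d)/\sqrt{t+2d} = \sqrt{\lambda}$.
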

 
\begin{proposition}\label{PROP:RotationTraceSquareRoot}
Let $X = \begin{bmatrix}X_1\\X_2\end{bmatrix}$ be a zero-mean random vector, and let $\theta,\varphi\in[0,2\pi)$. Let $a = \E[X_1^2], b = \E[X_2^2],$ and $c=\E[X_1X_2]=\Cov(X_1,X_2).$ Then
\[
\tr[(\Sigma _{R_\theta X}\Sigma _{R_\varphi X})^\frac12] = \sqrt{((a-b)^2+4c^2)\cos^2(\theta  - \varphi ) + 4(ab-c^2)}.
\]
\end{proposition}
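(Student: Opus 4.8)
The plan is to reduce the entire computation to the two scalars $\tr(M)$ and $\det(M)$ for the matrix $M = \Sigma_{R_\theta X}\Sigma_{R_\varphi X}$, and then to invoke Lemma \ref{LEM:SquareRoot}, which gives $\tr[M^\frac12] = (\tr(M)+2\sqrt{\det(M)})^\frac12$ for a $2\times 2$ matrix $M$. Thus it suffices to show that $\tr(M)+2\sqrt{\det(M)}$ equals the expression under the radical in the claimed formula. Note that $M$ is a product of two positive semidefinite matrices, so its eigenvalues are nonnegative (cf.\ the argument in Proposition \ref{PROP:TraceNormEquivalence}), and the Levinger formula applies with the positive square roots.

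First I would dispatch the determinant. Writing $\Sigma_{R_\theta X} = R_\theta \Sigma_X R_\theta^\top$ with $\det R_\theta = 1$, multiplicativity of the determinant gives $\det \Sigma_{R_\theta X} = \det \Sigma_X = ab - c^2$, and likewise for $\varphi$. Hence $\det(M) = (ab-c^2)^2$, and since $\Sigma_X$ is positive semidefinite we have $ab-c^2 \geq 0$, so $\sqrt{\det(M)} = ab-c^2$. This is the easy half.

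The main work is $\tr(M)$, and the key device is the cyclic invariance of the trace together with the group law for planar rotations. Writing $\Sigma_{R_\theta X} = R_\theta \Sigma_X R_\theta^\top$ and cycling the factor $R_\varphi^\top$ to the front, $\tr(M) = \tr(R_\varphi^\top R_\theta\, \Sigma_X\, R_\theta^\top R_\varphi\, \Sigma_X)$. Since $R_\varphi^\top R_\theta = R_{\theta-\varphi}$ and $R_\theta^\top R_\varphi = R_{\theta-\varphi}^\top$, this collapses to $\tr(\Sigma_{R_\psi X}\,\Sigma_X)$ with $\psi := \theta-\varphi$. This is the crux: it reduces a two-angle problem to one depending only on the difference $\theta - \varphi$, which is exactly why the final answer carries $\cos^2(\theta-\varphi)$. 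I would then expand $\tr(\Sigma_{R_\psi X}\Sigma_X)$ using the explicit entries of $\Sigma_{R_\psi X}$ recorded in Proposition \ref{PROP:RotationTrace} (with $\theta$ replaced by $\psi$) and the entries $a,c,b$ of $\Sigma_X$, via the identity $\tr(PQ) = P_{11}Q_{11} + 2P_{12}Q_{12} + P_{22}Q_{22}$ valid for symmetric $2\times 2$ matrices.

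The expected payoff of this expansion is that every term carrying the odd factor $\cos\psi\sin\psi$ cancels, leaving $\tr(M) = (a^2+b^2+2c^2)\cos^2\psi + 2(ab-c^2)\sin^2\psi$. Substituting $\sin^2\psi = 1-\cos^2\psi$ and collecting yields $\tr(M) = ((a-b)^2+4c^2)\cos^2\psi + 2(ab-c^2)$. Adding $2\sqrt{\det(M)} = 2(ab-c^2)$ produces $((a-b)^2+4c^2)\cos^2\psi + 4(ab-c^2)$, and taking the square root via Lemma \ref{LEM:SquareRoot} gives the claim with $\psi = \theta-\varphi$. The only genuine obstacle is the bookkeeping in the trace expansion; the cancellation of the cross terms is the one step worth checking carefully, since it is what makes the formula clean, but it becomes mechanical once the reduction to the single angle $\psi$ is in place.
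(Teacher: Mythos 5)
Your proof is correct and shares the paper's skeleton --- reduce $\tr[M^\frac12]$ for $M=\Sigma_{R_\theta X}\Sigma_{R_\varphi X}$ to the two scalars $\tr(M)$ and $\det(M)$ and invoke Lemma \ref{LEM:SquareRoot} --- but your treatment of the trace, which is the main labor, is genuinely slicker than the paper's. The paper computes $\tr(M)$ by directly expanding the two-angle product from the entries in Proposition \ref{PROP:RotationTrace}, which it itself describes as ``tedious calculations involving trigonometric reduction formulas,'' arriving at $\tfrac12\left((a+b)^2+((a-b)^2+4c^2)\cos(2(\theta-\varphi))\right)$. Your cyclic-trace reduction $\tr(M)=\tr(R_\varphi^\top R_\theta\,\Sigma_X\,R_\theta^\top R_\varphi\,\Sigma_X)=\tr(\Sigma_{R_\psi X}\Sigma_X)$ with $\psi=\theta-\varphi$ eliminates one angle before any trigonometry begins, and it explains structurally --- rather than as an after-the-fact cancellation --- why the answer depends only on $\theta-\varphi$. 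Your one-angle expansion checks out: the $\cos\psi\sin\psi$ terms cancel (indeed $-2ac+2c(a-b)+2bc=0$), and $(a^2+b^2+2c^2)\cos^2\psi+2(ab-c^2)\sin^2\psi=((a-b)^2+4c^2)\cos^2\psi+2(ab-c^2)$, which agrees with the paper's formula under the substitution $\cos(2\psi)=2\cos^2\psi-1$; adding $2\sqrt{\det M}=2(ab-c^2)$ then gives the claimed radicand. The determinant step is identical in substance in both arguments (multiplicativity of $\det$ with $\det R_\theta=1$, plus $ab-c^2\geq0$; the paper cites Cauchy--Schwarz where you cite positive semidefiniteness of $\Sigma_X$, which is the same fact). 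You are in fact slightly more careful than the paper on one point: you observe that $M$, as a product of positive semidefinite matrices, has nonnegative eigenvalues (via the same eigenvalue argument as in Proposition \ref{PROP:TraceNormEquivalence}), which is what justifies taking the positive branch in Levinger's formula --- the paper applies Lemma \ref{LEM:SquareRoot} without comment on this.
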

 
\begin{proof}
Utilizing Proposition \ref{PROP:RotationTrace} and some tedious calculations involving trigonometric reduction formulas, one finds that
\[\tr[\Sigma _{R_\theta X}\Sigma _{R_\varphi X}] = \frac12\left((a+b)^2+((a-b)^2+4c^2)\cos(2(\theta-\varphi))\right)\]
and
\[\det(\Sigma _{R_\theta X}\Sigma _{R_\varphi X}) = \det(\Sigma_X)^2 = (ab-c^2)^2.\]
Note that by Cauchy-Schwarz, $c^2\leq ab$, so $\sqrt{\det(\Sigma _{R_\theta X}\Sigma _{R_\varphi X})} = |ab-c^2| = ab-c^2.$ Applying Lemma \ref{LEM:SquareRoot}, and substituting $2\cos^2(\theta-\varphi)-1=\cos(2(\theta-\varphi))$ yields the desired result.

\end{proof}

\begin{remark}
One may note that in a simpler case in which the components of $X$ are uncorrelated, i.e., $\E[X_1X_2]=0$, the result of Proposition \ref{PROP:RotationTraceSquareRoot} becomes somewhat simpler, and we find that 
\[\tr[(\Sigma _{R_\theta X}\Sigma _{R_\varphi X})^\frac12] = \sqrt{(a-b)^2\cos^2(\theta-\varphi)+4ab}.\]
\end{remark}
 
Combining the two propositions above allows us to compute the trace term of \eqref{EQN:Orthogonal} for any $X$ with uncorrelated components. For $X$ with non-zero mean, we have the following. 
\begin{proposition} \label{PROP:RotationMean}
Let $X = \begin{bmatrix}X_1\\ X_2\end{bmatrix}\in\R^2$, with $\E[X_1], \E[X_2]$ not necessarily 0.  Then, for any $\theta,\varphi\in[0,2\pi)$,
\[|\E[R_\theta X] - \E[R_\varphi X]|^2 = 2\left(\E[X_1]^2 + \E[X_2]^2\right)(1 - \cos (\theta  - \varphi)).\]
\end{proposition}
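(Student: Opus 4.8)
The plan is to exploit the linearity of expectation to pull the rotation matrices outside, reducing the claim to a purely deterministic computation about the matrix $R_\theta - R_\varphi$ acting on the mean vector. Writing $m_X = \E[X] = (\E[X_1],\E[X_2])^\top$, the identity $\E[R_\theta X] = R_\theta \E[X] = R_\theta m_X$ (which is already used in the discussion preceding \eqref{EQN:Orthogonal}) gives
\[
\E[R_\theta X] - \E[R_\varphi X] = (R_\theta - R_\varphi)\, m_X,
\]
so that
\[
|\E[R_\theta X] - \E[R_\varphi X]|^2 = m_X^\top (R_\theta - R_\varphi)^\top (R_\theta - R_\varphi)\, m_X.
\]
The entire problem thus collapses to identifying the matrix $(R_\theta - R_\varphi)^\top (R_\theta - R_\varphi)$.

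Next I would expand this Gram matrix and simplify using the structure of the rotation group. Since each $R_\theta$ is orthogonal, $R_\theta^\top R_\theta = R_\varphi^\top R_\varphi = I$, leaving
\[
(R_\theta - R_\varphi)^\top (R_\theta - R_\varphi) = 2I - \left(R_\theta^\top R_\varphi + R_\varphi^\top R_\theta\right).
\]
Using $R_\theta^\top = R_{-\theta}$ and the composition law $R_\alpha R_\beta = R_{\alpha+\beta}$, the cross terms become $R_{\varphi-\theta} + R_{\theta-\varphi}$. Because cosine is even and sine is odd, the off-diagonal $\pm\sin$ entries cancel and the diagonal entries add, yielding $R_{\varphi-\theta} + R_{\theta-\varphi} = 2\cos(\theta-\varphi)\,I$. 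Hence
\[
(R_\theta - R_\varphi)^\top (R_\theta - R_\varphi) = 2\bigl(1 - \cos(\theta-\varphi)\bigr) I.
\]

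Substituting this back and using $m_X^\top m_X = |m_X|^2 = \E[X_1]^2 + \E[X_2]^2$ gives exactly
\[
|\E[R_\theta X] - \E[R_\varphi X]|^2 = 2\bigl(1 - \cos(\theta-\varphi)\bigr)\bigl(\E[X_1]^2 + \E[X_2]^2\bigr),
\]
as claimed. There is essentially no hard obstacle here: the only place requiring a moment's care is the parity argument collapsing $R_{\varphi-\theta} + R_{\theta-\varphi}$ to a scalar multiple of the identity, which is what makes the dependence on $\theta$ and $\varphi$ enter only through the difference $\theta-\varphi$. Alternatively, one could bypass the group-theoretic simplification and simply write out $R_\theta m_X$ and $R_\varphi m_X$ componentwise, compute the squared Euclidean distance, and invoke the identity $\cos\theta\cos\varphi + \sin\theta\sin\varphi = \cos(\theta-\varphi)$; I prefer the matrix route as it is cleaner and avoids the bookkeeping of cross terms.
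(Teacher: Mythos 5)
Your proof is correct, but it takes a genuinely different route from the paper's. The paper proves the proposition by brute force: it writes out the components of $R_\theta X - R_\varphi X$ explicitly as $X_1(\cos\theta-\cos\varphi)-X_2(\sin\theta-\sin\varphi)$ and $X_1(\sin\theta-\sin\varphi)+X_2(\cos\theta-\cos\varphi)$, takes expectations, expands the squared Euclidean norm, and collapses the cross terms via $\cos\theta\cos\varphi+\sin\theta\sin\varphi=\cos(\theta-\varphi)$ --- exactly the componentwise alternative you mention and discard in your closing remarks. Your Gram-matrix argument instead isolates the deterministic identity $(R_\theta-R_\varphi)^\top(R_\theta-R_\varphi)=2\bigl(1-\cos(\theta-\varphi)\bigr)I$, using orthogonality, $R_\theta^\top=R_{-\theta}$, the composition law, and the parity cancellation $R_{\alpha}+R_{-\alpha}=2\cos(\alpha)\,I$; all of these steps check out. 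What your route buys: it makes structurally transparent \emph{why} the answer depends on $\theta,\varphi$ only through the difference $\theta-\varphi$ (group invariance rather than a trig coincidence), it separates the probabilistic content (linearity of expectation, already noted in the paper just before \eqref{EQN:Orthogonal}) from the deterministic linear algebra, and the first reduction to $2I-(R_\theta^\top R_\varphi+R_\varphi^\top R_\theta)$ is valid for orthogonal matrices in any dimension, so only the final scalar-multiple-of-identity step is special to $SO(2)$. What the paper's route buys is brevity and self-containedness: it is a three-line computation requiring no appeal to the group structure, and its componentwise expansion style matches the computations in the neighboring Propositions (e.g., the entries of $\Sigma_{R_\theta X}$), keeping the section's presentation uniform.
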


 \begin{proof}
\[R_\theta X - R_\varphi X = \begin{bmatrix} X_1(\cos\theta-\cos\varphi)-X_2(\sin\theta-\sin\varphi)\\
X_1(\sin\theta-\sin\varphi) + X_2(\cos\theta-\cos\varphi)\end{bmatrix},\]
whence by taking expected value and then the square of the Euclidean norm we get 
\[ \left(\E[X_1]^2 + \E[X_2]^2\right)\left(2 - 2(\cos \theta \cos \varphi  + \sin \theta \sin \varphi) \right) = 
2\left(\E[X_1]^2 + \E[X_2]^2\right)(1 - \cos (\theta  - \varphi))\]
as claimed. 
\end{proof}

 Combining Corollary \ref{COR:W2TraceEquality} with Remark \ref{REM:OrthogonalTrace} and Propositions \ref{PROP:RotationTrace}, \ref{PROP:RotationTraceSquareRoot}, and \ref{PROP:RotationMean} with $X_i$ replaced by $X_i-\E[X_i]$, we arrive at the following.



\begin{theorem}\label{THM:RotationW2}
    Let $X= \begin{bmatrix}X_1\\ X_2\end{bmatrix}\in\R^2$ be a random vector. Then for any $\theta,\varphi\in[0,2\pi)$
\begin{multline}\label{EQN:RotationLowerBound}W_2(R_\theta X,R_\varphi X)^2 \geq 2\left((\E[X_1])^2 +
(\E[X_2])^2 \right)(1 - \cos (\theta  - \varphi )) +  2(\Var[X_1] + \Var[X_2]) \\ - 
2\sqrt{((\Var[X_1] - \Var[X_2])^2 +4(\E[X_1X_2])^2)\cos^2(\theta  - \varphi ) + 4(\Var[X_1]\Var[X_2]-(\E[X_1X_2])^2)},\end{multline} 
\end{theorem}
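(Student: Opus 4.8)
The plan is to assemble the bound directly from the pieces already established, since no new inequality needs to be proved: the lower bound in \eqref{EQN:RotationLowerBound} is precisely the one furnished by Corollary \ref{COR:W2TraceEquality} specialized to the pair $R_\theta X$ and $R_\varphi X$, and the task is only to evaluate its two summands in closed form. First I would write Corollary \ref{COR:W2TraceEquality} with $X$ and $Y$ replaced by $R_\theta X$ and $R_\varphi X$, namely
\[W_2(R_\theta X, R_\varphi X)^2 \geq |m_{R_\theta X} - m_{R_\varphi X}|^2 + \tr[\Sigma_{R_\theta X - m_{R_\theta X}} + \Sigma_{R_\varphi X - m_{R_\varphi X}} - 2(\Sigma_{R_\theta X - m_{R_\theta X}}\Sigma_{R_\varphi X - m_{R_\varphi X}})^\frac12],\]
and then treat the two terms separately. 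The mean term is immediate: Proposition \ref{PROP:RotationMean} gives $|m_{R_\theta X} - m_{R_\varphi X}|^2 = 2((\E[X_1])^2 + (\E[X_2])^2)(1 - \cos(\theta - \varphi))$, which is exactly the first line of \eqref{EQN:RotationLowerBound}.

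For the trace (Bures) term, the organizing observation — which is what the phrase ``$X_i$ replaced by $X_i - \E[X_i]$'' encodes — is that centering commutes with rotation: $R_\theta X - m_{R_\theta X} = R_\theta X - R_\theta m_X = R_\theta(X - m_X)$, and likewise for $\varphi$. Consequently the centered covariances above are exactly the covariances of the rotations of the zero-mean vector $X - m_X$, so Propositions \ref{PROP:RotationTrace} and \ref{PROP:RotationTraceSquareRoot}, both stated for zero-mean vectors, apply verbatim to $X - m_X$. Writing $a = \Var[X_1]$, $b = \Var[X_2]$, and $c = \Cov(X_1, X_2)$ for the second moments of the centered components, Proposition \ref{PROP:RotationTrace} yields $\tr[\Sigma_{R_\theta X - m_{R_\theta X}}] = \tr[\Sigma_{R_\varphi X - m_{R_\varphi X}}] = \Var[X_1] + \Var[X_2]$, accounting for the $2(\Var[X_1] + \Var[X_2])$ term, while Proposition \ref{PROP:RotationTraceSquareRoot} supplies the cross term $\tr[(\Sigma_{R_\theta X - m_{R_\theta X}}\Sigma_{R_\varphi X - m_{R_\varphi X}})^\frac12]$ as the square root on the last line of \eqref{EQN:RotationLowerBound}. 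Substituting these three contributions into the displayed inequality finishes the derivation.

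I expect no genuine analytic obstacle, as the substantive computations live in the auxiliary propositions; the proof is a matter of correct bookkeeping. The one step I would present explicitly is the centering identity $R_\theta X - m_{R_\theta X} = R_\theta(X - m_X)$, since it is precisely what legitimizes invoking the zero-mean Propositions \ref{PROP:RotationTrace} and \ref{PROP:RotationTraceSquareRoot} for a vector with nonzero mean. I would also flag the minor notational point that, after centering, the parameter $c$ of Proposition \ref{PROP:RotationTraceSquareRoot} is the covariance $\Cov(X_1, X_2) = \E[(X_1 - \E[X_1])(X_2 - \E[X_2])]$ rather than the raw moment $\E[X_1 X_2]$; the quantity $\E[X_1 X_2]$ in \eqref{EQN:RotationLowerBound} should be read with this understanding, the two coinciding when $X$ has zero mean.
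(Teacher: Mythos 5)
Your proof is correct and takes essentially the same route as the paper, which likewise obtains the theorem by combining Corollary \ref{COR:W2TraceEquality} with Propositions \ref{PROP:RotationMean}, \ref{PROP:RotationTrace}, and \ref{PROP:RotationTraceSquareRoot} applied ``with $X_i$ replaced by $X_i-\E[X_i]$.'' Your explicit centering identity $R_\theta X - m_{R_\theta X} = R_\theta(X - m_X)$, and your remark that after centering the quantity $\E[X_1X_2]$ in \eqref{EQN:RotationLowerBound} is to be read as $\Cov(X_1,X_2)$, make the bookkeeping more transparent than the paper's one-line assembly.
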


Note that if $R_\theta X$ and $R_\varphi X$ are Gaussians or elliptically contoured distributions, then the lower bound is attained in \eqref{EQN:RotationLowerBound} by the result of Gelbrich.

\subsection{Illustrating the Lower bound for Rotations}

Here we give some further perspectives on the lower bound of Theorem \ref{THM:RotationW2} from various directions.  We begin by noticing that the lower bound is equivalent to the Euclidean distance between the angles (for small difference in angles). In particular, for random vectors for which equality holds in Theorem \ref{THM:RotationW2}, we see that $W_2(R_\theta X,R_\varphi X)\asymp |\theta-\varphi|$ (that is, there are constants $c_1,c_2\geq0$ such that $c_1|\theta-\varphi|\leq W_2(R_\theta X, R_\varphi X)\leq c_2|\theta-\varphi|$) for $|\theta-\varphi|\leq\frac{\pi}{2}$.

\begin{theorem}\label{THM:fbound}

Let $X=\begin{bmatrix} X_1\\X_2\end{bmatrix}$ be a random vector on $\R^2$ with $m_i=\E[X_i],$ $i=1,2$, $\Var[X_1]=a, \Var[X_2]=b$, and $\E[X_1X_2]=c$. Let $g(X,\theta,\varphi)$ be the lower bound of \eqref{EQN:RotationLowerBound}, i.e., 
\[g(X,\theta,\varphi) = 2(m_1^2+m_2^2)(1-\cos(\theta-\varphi))+2(a+b)-2\sqrt{((a-b)^2+4c^2)\cos^2(\theta-\varphi)+4(ab-c^2)}.\]  Then for $|\theta-\varphi|\leq\frac{\pi}{2},$ we have
\[\left(\frac{8(m_1^2+m_2^2)}{\pi^2}+\frac{4((a-b)^2+4c^2)}{(4+\pi^2)(a+b)}\right)|\theta  - \varphi|^2 \leq g(X,\theta,\varphi)
 \leq \left( m_1^2 + m_2^2 + \frac{(a-b)^2+4c^2}{\sqrt{ab-c^2}} \right)|\theta  - \varphi |^2.\]
\end{theorem}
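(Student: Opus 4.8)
The plan is to reduce the statement to a one-variable estimate in $\psi := \theta - \varphi$, which we may assume lies in $[0, \tfrac{\pi}{2}]$ since every quantity involved depends on $\psi$ only through $\cos\psi$, $\sin^2\psi$, and $\psi^2$, all even in $\psi$. Abbreviate $P := m_1^2 + m_2^2$, $S := a + b$, $D := (a-b)^2 + 4c^2$, and $\Delta := ab - c^2$, where $\Delta \ge 0$ by Cauchy--Schwarz (as in Proposition \ref{PROP:RotationTraceSquareRoot}). The key algebraic observation is the identity
\[
D + 4\Delta = (a-b)^2 + 4c^2 + 4(ab - c^2) = (a+b)^2 = S^2,
\]
which rewrites the radicand as $D\cos^2\psi + 4\Delta = S^2 - D\sin^2\psi$. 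Thus $g$ splits as a sum of a mean part and a trace part,
\[
g(X,\theta,\varphi) = \underbrace{2P(1 - \cos\psi)}_{M(\psi)} + \underbrace{\bigl(2S - 2\sqrt{S^2 - D\sin^2\psi}\bigr)}_{h(\psi)},
\]
in which $M$ depends only on $P$ and $h$ only on $S, D, \Delta$; I would bound the two summands separately and add.

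For the mean part, the ratio $\frac{1-\cos\psi}{\psi^2} = \frac12\bigl(\frac{\sin(\psi/2)}{\psi/2}\bigr)^2$ is nonincreasing on $[0,\tfrac\pi2]$ because $t \mapsto \frac{\sin t}{t}$ is, so it lies between its endpoint value $\frac{4}{\pi^2}$ at $\psi = \tfrac\pi2$ and its limit $\frac12$ at $\psi = 0$. This gives $\frac{4}{\pi^2}\psi^2 \le 1 - \cos\psi \le \frac12\psi^2$, hence $\frac{8P}{\pi^2}\psi^2 \le M(\psi) \le P\psi^2$, which supplies exactly the coefficients of $m_1^2 + m_2^2$ in both bounds.

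For the trace part I would rationalize,
\[
h(\psi) = \frac{2D\sin^2\psi}{S + \sqrt{S^2 - D\sin^2\psi}},
\]
and bound numerator and denominator. For the upper bound, $\sin^2\psi \le \psi^2$, while $\sin^2\psi \le 1$ gives $S^2 - D\sin^2\psi \ge S^2 - D = 4\Delta$, so the denominator is at least $2\sqrt\Delta$; hence $h(\psi) \le \frac{2D\psi^2}{2\sqrt\Delta} = \frac{D}{\sqrt\Delta}\psi^2$. For the lower bound, Jordan's inequality $\sin\psi \ge \frac{2}{\pi}\psi$ on $[0,\tfrac\pi2]$ gives $\sin^2\psi \ge \frac{4}{\pi^2}\psi^2$, while $\sqrt{S^2 - D\sin^2\psi} \le S$ bounds the denominator above by $2S$; hence $h(\psi) \ge \frac{4D}{\pi^2 S}\psi^2$. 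Adding the two parts yields $g \le \bigl(P + \frac{D}{\sqrt\Delta}\bigr)\psi^2$ and $g \ge \bigl(\frac{8P}{\pi^2} + \frac{4D}{\pi^2 S}\bigr)\psi^2$, which are precisely the asserted bounds after unwinding the abbreviations (the stated lower coefficient $\frac{4}{4+\pi^2}$ follows since $\frac{4}{4+\pi^2} \le \frac{4}{\pi^2}$).

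The only genuinely clever step is spotting $D + 4\Delta = S^2$; after it, rationalization converts the awkward difference $2S - 2\sqrt{\cdots}$ into a quotient whose parts are each controlled by $\sin^2\psi \le \psi^2$, Jordan's inequality, and the trivial denominator bounds between $2\sqrt\Delta$ and $2S$, so the rest is bookkeeping rather than a real obstacle. Two degeneracies deserve a remark: when $\Delta = 0$ the upper-bound coefficient $D/\sqrt\Delta$ is $+\infty$ and the inequality holds vacuously (the bound is informative only for nondegenerate covariance), and when $S = 0$ one has $a = b = c = 0$, so $D = 0$ and $h \equiv 0$, leaving only the mean part.
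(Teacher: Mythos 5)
Your proof is correct, and it takes a genuinely different route from the paper's, though both rest on the same split of $g$ into the mean term $2P(1-\cos\psi)$ and the trace term. For the mean term the two arguments are essentially equivalent (you use monotonicity of $\sin t/t$; the paper uses the chord bound $\sin z \ge \tfrac{2\sqrt2}{\pi}z$ on $[0,\tfrac\pi4]$, with identical constants $\tfrac{8P}{\pi^2}$ and $P$). The difference is in the trace term: the paper proves the upper bound by the Mean Value Theorem applied to $f(x)=2(a+b)-2\sqrt{D\cos^2x+4\Delta}$, bounding $f'(\xi)$, and proves the lower bound through the chain $\cos^2x\le 1-\tfrac{x^2}{1+x^2}$, then $\sqrt{1-u}\le 1-\tfrac u2$, then $\tfrac{x^2}{1+x^2}\ge\tfrac{4x^2}{4+\pi^2}$; you instead promote the identity $D+4\Delta=S^2$ (which the paper uses only in passing, to show $D/S^2\le1$) to the organizing principle, rewrite the radicand as $S^2-D\sin^2\psi$, and rationalize to get $h(\psi)=\tfrac{2D\sin^2\psi}{S+\sqrt{S^2-D\sin^2\psi}}$, after which both bounds follow from $\sin^2\psi\le\psi^2$, Jordan's inequality, and the trivial denominator bounds $2\sqrt\Delta$ and $2S$. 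Your route is more economical — one mechanism handles both directions, no MVT and no square-root expansion — and it actually yields the sharper lower coefficient $\tfrac{4D}{\pi^2 S}$, which strictly improves the paper's $\tfrac{4D}{(4+\pi^2)S}$ (the loss in the paper comes from the $\tfrac{x^2}{1+x^2}$ step); as you correctly note, the stated inequality then follows since $\pi^2\le 4+\pi^2$. Your remarks on the degenerate cases $\Delta=0$ and $S=0$ are also apt, and apply equally to the paper's proof, which likewise divides by $\sqrt{ab-c^2}$ without comment.
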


\begin{proof}
Regarding the upper bound, since $1-\cos x = 2\sin^2\frac{x}{2}$ and $|\sin z| \leq |z|$ we have 
\[2(m_1^2 + m_2^2)(1 - \cos (\theta  - \varphi )) \leq (m_1^2 + m_2^2)|\theta  - \varphi|^2.\]
Next, to find the upper bound on the remaining two terms, we define the following function:
\[f(x) = 2(a + b) - 2\sqrt {((a - b)^2+4c^2)\cos^2x + 4(ab-c^2)}. \] 
Notice that $f(0) = 0$, whereby for every $0<x\leq\frac\pi2$, there exists a $0<\xi<x$ such that $f(x) = f(0) + f'(\xi)x$ by the Mean Value Theorem. Now, 
\[f'(\xi) = \frac{((a - b)^2+4c^2) 2\sin \xi\cos \xi}{\sqrt {((a - b)^2+4c^2)\cos^2\xi + 4(ab-c^2)}}  = \frac{((a - b)^2+4c^2)\sin (2\xi)}{\sqrt{((a - b)^2+4c^2)\cos^2\xi + 4(ab-c^2)}},\]
whence, using that $|\sin(t)|\leq |t|$ and $|\xi|<|x|$,
\[|f(x)|=|f'(\xi)x| \leq \frac{((a - b)^2+4c^2)|\sin (2\xi)||x|}{2\sqrt{ab-c^2} } \leq \frac{((a - b)^2+4c^2)|x|^2}{\sqrt{ab-c^2}}.\] Combining the above estimates with $x=\theta-\varphi$ yields the desired upper bound. 

As before, the first term in the lower bound is
$(m_1^2 + m_2^2)\;4{\sin ^2}\left( {\frac{{\theta  - \varphi }}{2}} \right).$ For
$0 \leq z \leq \frac{\pi}{4}$, we have $\sin z \geq \frac{2\sqrt{2}}{\pi}z$, which gives 
$4{\sin ^2}\left(\frac{\theta  - \varphi }{2} \right) \geq \frac{8|\theta  - \varphi|^2}{\pi ^2}$ plugging in $z = \frac{\theta-\varphi}{2}$. For the rest of the lower bound, we use $\cos^2 x\leq 1-\frac{x^2}{1+x^2}$ for $0\leq x\leq\frac{\pi}{2}$ to see that
\begin{align*}
   & 2(a+b)-2\sqrt{((a-b)^2+4c^2)\cos^2x+4(ab-c^2)} \\ & \geq 2(a+b)-2\sqrt{(a+b)^2-((a-b)^2+4c^2)\frac{x^2}{1+x^2}}\\
   & \geq 2(a+b)\left(1-\sqrt{1-\frac{((a-b)^2+4c^2)}{(a+b)^2}\frac{x^2}{1+x^2}}\right).
\end{align*}
Next, notice that \[\frac{(a-b)^2+4c^2}{(a+b)^2} = \frac{(a+b)^2+4(c^2-ab)}{(a+b)^2}\leq 1\]
since $c^2-ab\leq 0$, and likewise $\frac{x^2}{1+x^2}\leq 1$. We then use the fact that $\sqrt{1-u^2}\leq1-\frac12u^2$ for $0\leq u\leq 1$ to see that the above quantity is at least
\begin{align*}
    2(a+b)\left(1-\left(1-\frac12\frac{((a-b)^2+4c^2)}{(a+b)^2}\frac{x^2}{1+x^2}\right)\right) & \geq \frac{((a-b)^2+4c^2)}{(a+b)}\frac{x^2}{1+x^2}\\
    & \geq \frac{((a-b)^2+4c^2)}{(a+b)}\frac{x^2}{1+\frac{\pi^2}{4}} \\
    & \geq \frac{((a-b)^2+4c^2)}{(a+b)}\frac{4}{4+\pi^2}x^2.
\end{align*}

Putting these together, we have
\[W_2(R_\theta X,R_\varphi X)^2 \geq \left(\frac{8(m_1^2+m_2^2)}{\pi^2}+\frac{4((a-b)^2+4c^2)}{(4+\pi^2)(a+b)}\right)|\theta-\varphi|^2.\]
\end{proof}

\begin{corollary}
When $X$ is a Gaussian, or any random vector that satisfies equality in the lower bound of Theorem \ref{THM:FrechetBounds} (hence of Theorem \ref{THM:RotationW2}), one can replace $g(X,\theta,\varphi)$ with $W_2(R_\theta X,R_\varphi X)$ in Theorem \ref{THM:fbound}. In particular, for such random vectors, the Wasserstein distance between rotated copies is equivalent to the absolute value of the difference of the rotation angles when $|\theta-\varphi|\leq\frac{\pi}{2}$.
\end{corollary}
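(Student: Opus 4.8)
The plan is to recognize the Corollary as an immediate specialization of Theorem \ref{THM:fbound}, the only real content being to upgrade the inequality $W_2(R_\theta X,R_\varphi X)^2\geq g(X,\theta,\varphi)$ of Theorem \ref{THM:RotationW2} to an equality for the stated distributions. First I would note that in Proposition \ref{PROP:W2MinExp} the mean term $|m_{R_\theta X}-m_{R_\varphi X}|^2$ is always exact, so the sole source of slack between $W_2(R_\theta X,R_\varphi X)^2$ and $g(X,\theta,\varphi)$ is the Bures term, i.e.\ the inequality in Theorem \ref{THM:FrechetBounds}. Thus it suffices to check that the pair $R_\theta X, R_\varphi X$ meets the equality (Gelbrich) condition \eqref{EQN:Gelbrich}.

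To that end I would verify that the relevant families are closed under the rotations $R_\theta$: an affine image of a Gaussian is Gaussian, and an orthogonal image of an elliptically contoured density is again elliptically contoured with the same radial profile $h$, only the shape matrix $M$ being replaced by $R_\theta M R_\theta^\top$. Hence $R_\theta X$ and $R_\varphi X$ lie in the same elliptically contoured family sharing a common $h$, so Gelbrich's result applies to the pair and equality holds throughout Corollary \ref{COR:W2TraceEquality} and Theorem \ref{THM:RotationW2}. This yields the identity $W_2(R_\theta X,R_\varphi X)^2=g(X,\theta,\varphi)$ for all $\theta,\varphi\in[0,2\pi)$.

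With this identity, I would simply substitute $W_2(R_\theta X,R_\varphi X)^2$ for $g(X,\theta,\varphi)$ in the two-sided estimate of Theorem \ref{THM:fbound} and take square roots. Writing $c_1,c_2$ for the square roots of the explicit lower and upper constants appearing there, this gives $c_1|\theta-\varphi|\leq W_2(R_\theta X,R_\varphi X)\leq c_2|\theta-\varphi|$ for $|\theta-\varphi|\leq\frac{\pi}{2}$, which is exactly the claimed equivalence $W_2(R_\theta X,R_\varphi X)\asymp|\theta-\varphi|$.

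The step I expect to demand the most attention is the closure claim of the second paragraph: one must confirm that the Dowson--Landau/Gelbrich equality condition genuinely survives rotation and holds for the two rotated copies simultaneously, not merely for $X$. A secondary point worth flagging is non-degeneracy: the upper constant requires $ab-c^2>0$, and the lower constant $c_1$ is strictly positive only when the distribution is anisotropic or has nonzero mean --- for a rotationally symmetric zero-mean $X$ one has $g\equiv0$ and $W_2\equiv0$, so the genuine two-sided equivalence presumes this non-degenerate setting.
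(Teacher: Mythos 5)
Your proposal is correct and follows the same route the paper intends: the corollary is an immediate substitution of $W_2(R_\theta X,R_\varphi X)^2$ for $g(X,\theta,\varphi)$ in Theorem \ref{THM:fbound}, justified by the equality case of Gelbrich noted right after Theorem \ref{THM:RotationW2} (the paper gives no separate proof, treating closure of Gaussians and elliptically contoured families under rotation as implicit, which you verify explicitly). Your non-degeneracy caveat --- that $c_1>0$ fails for rotationally invariant zero-mean $X$, where both sides vanish --- is a sound observation consistent with the paper's deliberately weak convention $c_1,c_2\geq 0$.
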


For measures which do not satisfy equality of the lower bound of Theorem \ref{THM:RotationW2}, it is unclear how tight the bound is. We now present some examples in which equality may not be achieved, and illustrate the quality of the lower bound.
 
\begin{example}\label{EX:01Rotation}  Let  $X$  be uniformly distributed over the unit square $[0,1]\times[0,1]\subset\R^2.$ Then $X = \begin{bmatrix} X_1 \\ X_2\end{bmatrix}$
where $X_i\sim\mathcal{U}[0,1]$ i.i.d., and are thus uncorrelated. We have densities $\one_{[0,1]}(x_1)$ and $\one_{[0,1]}(x_2)$ with $\E[X_1] = \E[X_2] = \frac{1}{2}, \Var[X_1] = \Var[X_2] = \frac{1}{{12}}$.
Then by Theorem \ref{THM:RotationW2}, for $0\leq|\theta-\varphi|\leq\frac{\pi}{2},$ we have

\begin{multline*}W_2(R_\theta X,R_\varphi X)^2 \geq 2\left(\E[X_1]^2 + \E[X_2]^2 \right)(1 - \cos (\theta  - \varphi )) +2(\Var[X_1]+\Var[X_2])\\ -4\sqrt{\Var[X_1]\Var[X_2]} = 1 - \cos (\theta  - \varphi ).\end{multline*} 

To illustrate the quality of the lower bound in this instance, Figure \ref{FIG:01Rotation} shows both the actual Wasserstein distance of $X$ and $R_\theta X$, the lower bound, and the difference with the lower bound for $\theta\in[0,\frac{\pi}{2}]$. We know that the square does not give equality in the lower bound of \ref{THM:RotationW2}, but nonetheless it appears to be a close approximation in this instance. In this, and subsequent figures, we generate a discrete approximation to the random vectors in question and compute Wasserstein distances via the linear program in the Python Optimal Transport (POT) package \cite{flamary2021pot}. More specifically, if $X_1 \sim \mathscr{U}[0,1]$, we take $t_1,\dots,t_N$ equi-distant points in $[0,1]$ (with $N$ large) and set $\hat X_1 = \frac{1}{N}\sum\delta_{t_i}.$ This is more structured than an empirical measure which could be used as an alternative, but is still a good estimation (e.g., \cite{canas2012learning,hamm2023lattice}). When generating the subsequent figures, we do not see an appreciable difference for values of $N$ larger than 100, so we typically utilize this sample size for computational speed.

\end{example}
\begin{figure}[!h]
\centering
\includegraphics[width=.85\textwidth]{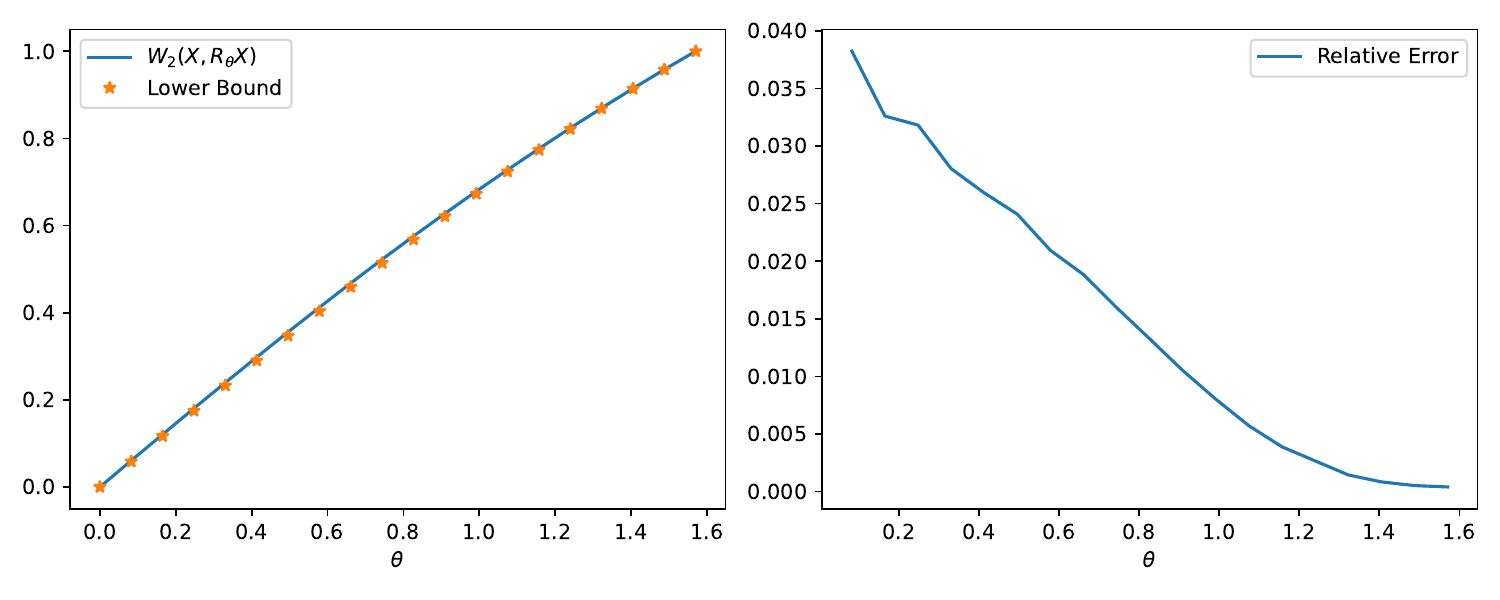}
\caption{Rotation of $[0,1]^2$ from Example \ref{EX:01Rotation}. (Left) $W_2(X,R_\theta X)$ and the lower bound $\sqrt{1-\cos(\theta)}$ for $\theta\in[0,\frac\pi2]$. (Right) Relative error $(W_2(X,R_\theta X)-\sqrt{1-\cos(\theta)})/W_2(X,R_\theta X)$ for $\theta\in[0,\frac{\pi}{2}]$.}\label{FIG:01Rotation}
\end{figure}

In the right side of Figure \ref{FIG:01Rotation}, we have taken out the first point $\theta=0$ because the relative error is large due to numerical artifacts. Similar numerical issues can be seen, e.g., in \cite{hamm2023manifold}.

\begin{example}\label{EX:CenteredCubeRotation}
Let $X$ be uniformly distributed over the centered unit square $[-\frac12,\frac12]^2\subset\R^2$. Then $\E[X_i]=0$ and $\Var[X_i]=\frac{1}{12}$ for $i=1,2$. Then the lower bound of Theorem \ref{THM:RotationW2} is $0$, and we only obtain the trivial bound $W_2(R_\theta X,R_\varphi X)^2\geq0$. Figure \ref{FIG:CenteredRotation} shows the Wasserstein distance of $X$ and $R_\theta X$ for $\theta\in[0,\frac{\pi}{2}]$. Note that due to the fact that the centered square has rotational symmetry at multiples of $\frac{\pi}{2}$, we see that the Wasserstein distance is increasing on $[0,\frac{\pi}{4}]$ and decreasing on $[\frac{\pi}{4},\frac{\pi}{2}]$ as expected.
 \begin{figure}[!h]
\centering
\includegraphics[width=.6\textwidth]{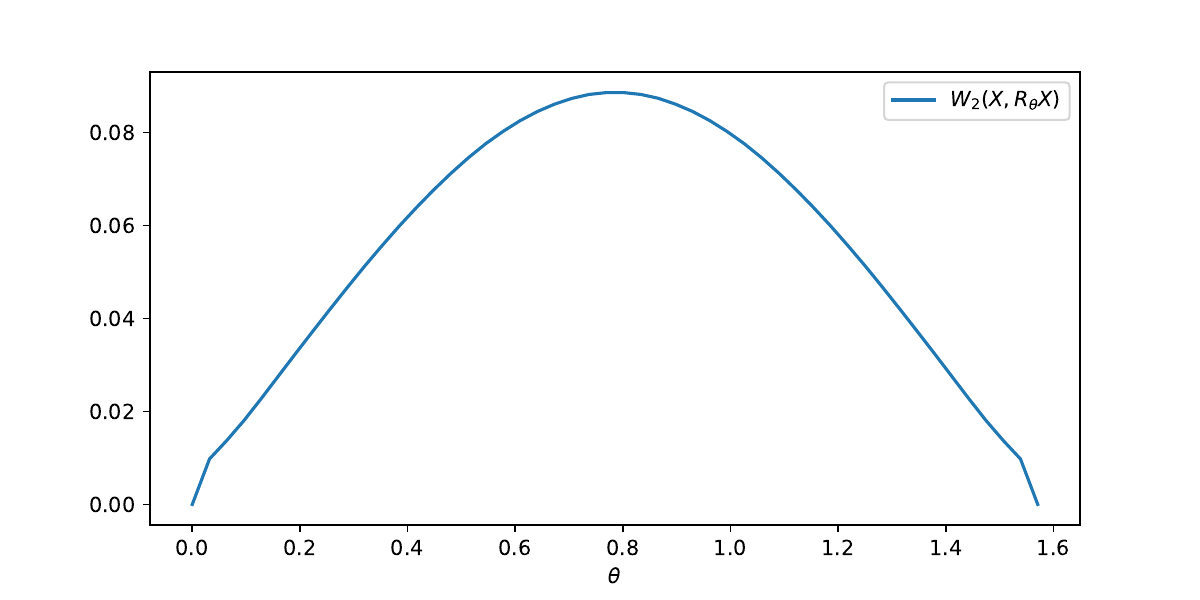}
\caption{Rotation of $[-\frac12,\frac12]^2$ from Example \ref{EX:CenteredCubeRotation}. Plotted is $W_2(X,R_\theta X)$ for $\theta\in[0,\frac{\pi}{2}]$.}\label{FIG:CenteredRotation}
\end{figure}
\end{example}

\begin{example}\label{EX:RectangleRotation} Let  $X$ be uniformly distributed on the rectangle  $[0,2]\times[0,1]\subset\R^2$ with densities 
$\frac{1}{2}\one_{[0,2]}({x_1}), \one_{[0,1]}(x_2)$ of its independent components $X_1,X_2$. Then 
$\E[X_1] = 1, \E[X_2] = \frac{1}{2}, \Var[X_1] = \frac{1}{3}, \Var[X_2] = \frac{1}{12}$ and the lower bound of Theorem \ref{THM:RotationW2} is 
\begin{equation}\label{EQN:RectangleLowerBound}W_2(R_\theta X,R_\varphi X)^2 \geq \frac{5}{2}(1 - \cos (\theta  - \varphi )) + \left(\frac{5}{6} - \sqrt{\frac{\cos^2(\theta  - \varphi )}{4} + \frac49} \right).\end{equation}

To illustrate the quality of the lower bound in this instance, Figure \ref{FIG:RectangleRotation} shows both the actual Wasserstein distance of $X$ and $R_\theta X$, the lower bound, and the difference with the lower bound for $\theta\in[0,\frac{\pi}{2}]$. We know that the rectangle does not give equality in the lower bound of \ref{THM:RotationW2}, but nonetheless it appears to be a relatively close approximation in this instance.

\begin{figure}[!h]
\centering
\includegraphics[width=.9\textwidth]{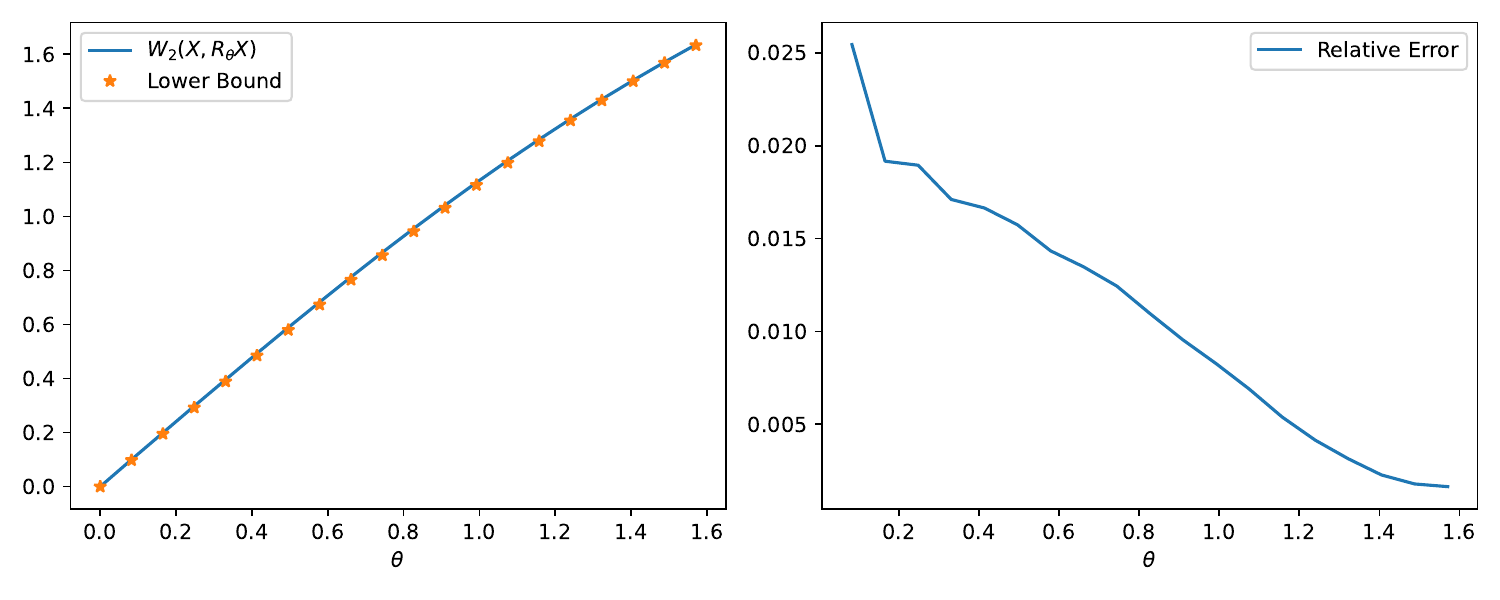}
\caption{Rotation of $[0,2]\times[0,1]$ from Example \ref{EX:RectangleRotation}. (Left) $W_2(X,R_\theta X)$ and the lower bound of \eqref{EQN:RectangleLowerBound} for $\theta\in[0,\frac\pi2]$. (Right) Relative error $(W_2(X,R_\theta X)-$lower bound)$/W_2(X,R_\theta X)$ for $\theta\in[0,\frac\pi2]$.}\label{FIG:RectangleRotation}
\end{figure}
\end{example}

We end this section with a proposition that gives an explicit representation of the map $T$ that gives the lower bound of Theorem \ref{THM:RotationW2} in the special case of random vectors with independent, zero-mean components (i.e., $\E[X_1]=\E[X_2]=\E[X_2X_2]=0$) in terms of the parameters discussed above. 

\begin{proposition}
Let $X = \begin{bmatrix}X_1\\X_2\end{bmatrix}$ where $X_1$ and $X_2$ are uncorrelated random variables ($\E[X_1X_2]=0)$ with mean 0 and $\Var[X_1]=\E[X_1^2]=a$, $\Var[X_2]=\E[X_2^2]=b$ for $a,b>0$. Let $T := \Sigma_X^{-1}(\Sigma_X\Sigma_{R_\theta X})^\frac12$. Then 
\[T = \frac{1}{\sqrt{4ab+(a-b)^2\cos^2\theta}}\begin{bmatrix}b+a\cos^2\theta + b\sin^2\theta & (a-b)\cos\theta\sin\theta \\
(a-b)\cos\theta\sin\theta & a+b\cos^2\theta+a\sin^2\theta\end{bmatrix}\]
and $\det(T)=1.$ Moreover, 
\[\E|TX-X|^2 = 2(a+b)-2\sqrt{4ab+(a-b)^2\cos^2\theta}\leq W_2(X,R_\theta X)^2.\]
In particular, the map $T$ attains the lower bound for the Wasserstein distance between $X$ and $R_\theta X$ in \eqref{EQN:W2MeanTraceNorm}.
\end{proposition}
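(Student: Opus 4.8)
The plan is to reduce everything to the $2\times 2$ matrix calculus already assembled in Lemma \ref{LEM:SquareRoot} and Propositions \ref{PROP:RotationTrace}--\ref{PROP:RotationTraceSquareRoot}. Since $X$ is zero-mean with uncorrelated components, its covariance is diagonal, $\Sigma_X = \diag(a,b)$, so $\Sigma_X^{-1} = \diag(1/a,1/b)$. Setting $c=\E[X_1X_2]=0$ in Proposition \ref{PROP:RotationTrace} gives the explicit entries of $\Sigma_{R_\theta X}$, and I would form the (non-symmetric) product $M := \Sigma_X\Sigma_{R_\theta X}$ directly. Its trace and determinant are already recorded in the proof of Proposition \ref{PROP:RotationTraceSquareRoot}: $\tr M = 2ab + (a-b)^2\cos^2\theta$ and $\det M = \det(\Sigma_X)^2 = (ab)^2$, so $\sqrt{\det M} = ab$.

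Next I would apply the square-root formula of Lemma \ref{LEM:SquareRoot} to $M$. The normalizing factor is $\sqrt{\tr M + 2\sqrt{\det M}} = \sqrt{4ab + (a-b)^2\cos^2\theta}$, which is precisely the denominator appearing in the claimed expression for $T$. Left-multiplying $M^\frac12$ by $\Sigma_X^{-1}$ scales its two rows by $1/a$ and $1/b$, and a short entry-by-entry simplification (repeatedly using $ab = ab(\cos^2\theta+\sin^2\theta)$ to rewrite the diagonal terms) matches each of the four entries of $\Sigma_X^{-1}M^\frac12$ with the asserted matrix, establishing the formula for $T$. The determinant claim is immediate from multiplicativity: $\det T = \det(\Sigma_X^{-1})\det(M^\frac12) = \frac{1}{ab}\sqrt{\det M} = \frac{ab}{ab} = 1$.

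For the final identity, since $X$ is zero-mean I would write $\E|TX-X|^2 = \tr[(T-I)\Sigma_X(T-I)^\top]$ and expand it as $\tr[T\Sigma_X T^\top] - 2\tr[T\Sigma_X] + \tr[\Sigma_X]$, using symmetry of $\Sigma_X$ to merge the two cross terms. Two observations then close it out. First, $T$ is symmetric (its off-diagonal entries coincide) and $\Sigma_X T = (\Sigma_X\Sigma_{R_\theta X})^\frac12$ by construction, so squaring gives $\Sigma_X T\Sigma_X T = \Sigma_X\Sigma_{R_\theta X}$, whence $T\Sigma_X T = \Sigma_{R_\theta X}$ and $\tr[T\Sigma_X T^\top] = \tr[\Sigma_{R_\theta X}] = a+b$ by Proposition \ref{PROP:RotationTrace}. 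Second, by the cyclic property $\tr[T\Sigma_X] = \tr[\Sigma_X^{-1}M^\frac12\Sigma_X] = \tr[M^\frac12]$, and Lemma \ref{LEM:SquareRoot} gives $\tr[M^\frac12] = \sqrt{\tr M + 2\sqrt{\det M}} = \sqrt{4ab + (a-b)^2\cos^2\theta}$. Substituting yields $\E|TX-X|^2 = 2(a+b) - 2\sqrt{4ab + (a-b)^2\cos^2\theta}$, which is exactly the trace term of the lower bound in \eqref{EQN:W2MeanTraceNorm} specialized to this zero-mean, uncorrelated case; by Corollary \ref{COR:W2TraceEquality} this is $\leq W_2(X,R_\theta X)^2$, so $T$ attains the lower bound.

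All calculations are elementary, and the heavy trigonometric reductions were carried out once and for all in Proposition \ref{PROP:RotationTraceSquareRoot}, so no single step is a genuine obstacle. The only place demanding care is the entry-by-entry matching of $\Sigma_X^{-1}M^\frac12$ against the stated matrix, where one must correctly fold the constant $ab$ from the square-root formula into the diagonal entries. An alternative that sidesteps the bookkeeping in the last part is to observe that $T$ is precisely the optimizer $\Sigma_X^{-1}(\Sigma_X\Sigma_Y)^\frac12$ of Theorem \ref{THM:FrechetBounds} with $\Sigma_Y = \Sigma_{R_\theta X}$; once one verifies $\Sigma_{TX} = T\Sigma_X T^\top = \Sigma_{R_\theta X}$, the equality $\E|TX-X|^2 = \tr[\Sigma_X + \Sigma_{R_\theta X} - 2(\Sigma_X\Sigma_{R_\theta X})^\frac12]$ is exactly the ``moreover'' assertion of that theorem.
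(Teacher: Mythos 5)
Your proposal is correct, and its computational core---forming $M=\Sigma_X\Sigma_{R_\theta X}$ from the diagonal $\Sigma_X=\diag(a,b)$, computing $\tr M$ and $\det M$, and applying Lemma \ref{LEM:SquareRoot} before left-multiplying by $\Sigma_X^{-1}$---is exactly the paper's computation of $T$. Where you genuinely diverge is in justifying the energy identity $\E|TX-X|^2=\tr[\Sigma_X+\Sigma_{R_\theta X}-2(\Sigma_X\Sigma_{R_\theta X})^{\frac12}]$: the paper simply cites the attainment (``moreover'') clause of Theorem \ref{THM:FrechetBounds}, remarking parenthetically that one could verify it directly, whereas you actually carry out that verification, expanding $\tr[(T-I)\Sigma_X(T-I)^\top]$ and using the symmetry of $T$, the identity $T\Sigma_X T=\Sigma_{R_\theta X}$ (obtained by squaring $\Sigma_X T=M^{\frac12}$), and $\tr[T\Sigma_X]=\tr[M^{\frac12}]$ via cyclicity. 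This buys self-containedness, and it also quietly discharges a hypothesis the paper leaves implicit: to invoke the Dowson--Landau clause with $Y=TX$ one needs $\Sigma_{TX}=\Sigma_{R_\theta X}$, which is precisely your identity $T\Sigma_X T^\top=\Sigma_{R_\theta X}$. Your determinant argument, $\det T=\det(\Sigma_X^{-1})\sqrt{\det M}=\frac{ab}{ab}=1$ by multiplicativity, is likewise cleaner than the paper, which asserts $\det(T)=1$ without comment; and your route to the final inequality via Corollary \ref{COR:W2TraceEquality} is equivalent to the paper's specialization of Theorem \ref{THM:RotationW2} at $\varphi=0$. One cosmetic note: the entry-matching you flag as the delicate step is in fact immediate, since the additive constant $\sqrt{\det M}=ab$ in the diagonal entries of $M^{\frac12}$ simply scales to $b$ and $a$ under $\diag(\frac1a,\frac1b)$---no trigonometric folding is needed.
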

\begin{proof}
First, the fact that $W_2(X,R_\theta X)^2 \geq 2(a+b)-2\sqrt{4ab+(a-b)^2\cos^2\theta}$ follows directly from Theorem \ref{THM:RotationW2} with $\varphi=0$. The map $T = \Sigma_X^{-1}(\Sigma_X\Sigma_{R_\theta X})^\frac12$ is such that $\E|TX-X|^2 = \tr[\Sigma_X+\Sigma_{R_\theta X}-2(\Sigma_X\Sigma_{R_\theta X})^\frac12]$ by Theorem \ref{THM:FrechetBounds} (although one can verify this directly as well). However, it is not generally true that $TX = R_\theta X$ or that $T = R_\theta$ (consider, e.g., the case $\theta=\frac{\pi}{2}$, where $T = \diag(\sqrt{b/a},\sqrt{a/b})$ but $R_\theta = \begin{bmatrix}0&-1\\1&0\end{bmatrix}$).  

To compute $T$, recall that 
\[\Sigma_X = \diag(a,b),\qquad \Sigma_{R_\theta X} = \begin{bmatrix}a\cos^2\theta+b\sin^2\theta & (a-b)\cos\theta\sin\theta \\ (a-b)\cos\theta\sin\theta & b\cos^2\theta+a\sin^2\theta\end{bmatrix}.\]
Thus, $\tr(\Sigma_X\Sigma_{R_\theta X}) = (a^2+b^2)\cos^2\theta+2ab\sin^2\theta$ and $\det(\Sigma_X\Sigma_{R_\theta X}) = a^2b^2$, hence by Lemma \ref{LEM:SquareRoot}, \[\tr(\Sigma_X\Sigma_{R_\theta X})+2\sqrt{\det(\Sigma_X\Sigma_{R_\theta X})} = (a^2+b^2)\cos^2\theta+2ab\sin^2\theta+2ab = 4ab+(a-b)^2\cos^2\theta.\]
Therefore,
\[(\Sigma_X\Sigma_{R_\theta X})^\frac12 = \frac{1}{\sqrt{4ab+(a-b)^2\cos^2\theta}}\begin{bmatrix} a(a\cos^2\theta+b\sin^2\theta+b) & a(a-b)\cos\theta\sin\theta \\ b(a-b)\cos\theta\sin\theta & b(b\cos^2\theta+a\sin^2\theta+a)\end{bmatrix}.\]
Multiplying this matrix by $\Sigma_X^{-1} = \diag(\frac1a,\frac1b)$ gives the desired formula for $T$.
\end{proof}

The utility of the above proposition is that, even though the Wasserstein distance between $X$ and $R_\theta X$ is difficult to ascertain in general, we can easily form an explicit map $T$ that achieves the lower bound for the Wasserstein distance. Note that a general formula for $T$ could be given in the case of correlated components; in particular, the constant term outside of the matrix is $((a-b)^2+4c^2)\cos^2\theta+4(ab-c^2))^{-\frac12}.$ The matrix product $\Sigma_X\Sigma_{R_\theta X}$ however has a more complicated form given that $\Sigma_X$ is no longer diagonal. For ease of space, we do not include this result here.

However, we note that for an extreme case of functionally dependent components ($X_1=kX_2$) in which case $ab=c^2$ and equality in the Cauchy--Schwartz inequality is achieved, the square root term becomes $((a+b)^2\cos^2\theta)^{-\frac12}$ which is strictly greater than $((a-b)^2+4c^2)\cos^2\theta+4(ab-c^2))^{-\frac12}$ for random vectors which do not satisfy $ab=c^2$. This means that for functionally dependent random vectors, the constant term in $T$ is larger, and thus the Bures metric is larger.

\subsection{Compositions of affine maps}\label{SEC:Composition}

In this section we derive an upper bound for the Wasserstein distance between probability measures corresponding to a random vector and a composition of affine transformations. Recall that $T_\alpha$, $S_\lambda$, and $R_\theta$ represent translation, scaling, and rotation maps, respectively. We consider compositions of these, which can be utilized to understand a larger parametrized subset of diffeomorphisms.

\begin{theorem}\label{THM:CompositionW2}
Let $X = \begin{bmatrix}X_1 \\X_2\end{bmatrix}$ be a random vector in $\R^2$. Let $\alpha\in\R$, $\lambda\in\R^2$ with $\lambda_1,\lambda_2>0$, and $\theta\in[0,2\pi)$. Suppose also that equality holds in Theorem \ref{THM:RotationW2} for $X$ replaced with $S_\lambda X$ and $\varphi=0$ (which can occur, e.g., if $S_\lambda X$ is a Gaussian or elliptically contoured distribution). Then,
\begin{multline*}
    W_2(T_\alpha\circ R_\theta\circ S_\lambda X,X) \leq |\alpha| + \sqrt{(\lambda_1-1)^2\E[X_1^2]+(\lambda_2-1)^2\E[X_2^2]} \\ + \left\{2(\lambda_1^2\E[X_1]^2+\lambda_2^2\E[X_2]^2)(1-\cos\theta)+2(\lambda_1^2\Var[X_1]+\lambda_2^2\Var[X_2]) \right. \\
\left.  - 2\left[((\lambda_1^2\Var[X_1] - \lambda_2^2\Var[X_2])^2+4\lambda_1^2\lambda_2^2(\E[X_1X_2])^2) \cos^2(\theta)  \right. \right. \\ \left. \left. + 4\lambda_1^2\lambda_2^2(\Var[X_1]\Var[X_2]-(\E[X_1X_2])^2)\right]^\frac12
\right\}^\frac12
\end{multline*}
\end{theorem}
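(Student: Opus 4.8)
The plan is to apply the triangle inequality for the metric $W_2$, inserting the two intermediate random vectors $S_\lambda X$ and $R_\theta S_\lambda X$ so that the distance splits into a dilation leg, a rotation leg, and a translation leg:
\[
W_2(T_\alpha\circ R_\theta\circ S_\lambda X, X) \leq W_2(S_\lambda X, X) + W_2(R_\theta S_\lambda X, S_\lambda X) + W_2(T_\alpha R_\theta S_\lambda X, R_\theta S_\lambda X).
\]
Each leg is then evaluated by one of the earlier results applied to the appropriate intermediate vector, and the three contributions are summed.

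First I would dispatch the two outer legs. For the translation leg, Proposition \ref{PROP:TranslationW2} applied to the random vector $R_\theta S_\lambda X$ with translation parameters $0$ and $\alpha$ gives $W_2(T_\alpha R_\theta S_\lambda X, R_\theta S_\lambda X) = |\alpha|$, a value independent of the underlying vector. For the dilation leg, I would compare $S_\lambda X$ with $S_{(1,1)}X = X$; since the scaling entries $\lambda_1,\lambda_2$ and $(1,1)$ are all strictly positive, the extension of Proposition \ref{PROP:DilationW2} recorded in Remark \ref{REM:Dilation} (valid for arbitrary, possibly correlated, components) yields $W_2(S_\lambda X, X) = \sqrt{(\lambda_1-1)^2\E[X_1^2] + (\lambda_2-1)^2\E[X_2^2]}$. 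Invoking the general form here is essential, because the theorem permits $\E[X_1X_2]\neq 0$, so the uncorrelated-components version of Proposition \ref{PROP:DilationW2} alone would not apply.

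The central leg is where the hypothesis does its work. Theorem \ref{THM:RotationW2} provides only a \emph{lower} bound for $W_2(R_\theta Z, R_\varphi Z)^2$, which cannot by itself be inserted into a triangle-inequality upper bound. The assumption that equality holds in Theorem \ref{THM:RotationW2} for $X$ replaced by $S_\lambda X$ and $\varphi = 0$ converts that lower bound into the \emph{exact} value of $W_2(R_\theta S_\lambda X, S_\lambda X)^2$. I would then substitute the moments of $Z = S_\lambda X = (\lambda_1 X_1, \lambda_2 X_2)$ into \eqref{EQN:RotationLowerBound}: namely $\E[Z_i]^2 = \lambda_i^2\E[X_i]^2$ and $\Var[Z_i] = \lambda_i^2\Var[X_i]$, while $\E[Z_1Z_2] = \lambda_1\lambda_2\E[X_1X_2]$ contributes the factor $\lambda_1^2\lambda_2^2$ to its square, and set $\cos(\theta - 0) = \cos\theta$. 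This reproduces exactly the expression inside the outer braces of the statement, whose square root is $W_2(R_\theta S_\lambda X, S_\lambda X)$.

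Summing the three legs gives the stated inequality. The only genuine obstacle is conceptual rather than computational: the rotation result is one-sided, so the equality hypothesis is indispensable for bounding the middle leg from above—without it the triangle inequality would give no usable estimate on that segment. The remaining work, namely verifying the moment substitutions for $S_\lambda X$ and confirming that the positivity of $\lambda$ licenses the correlated-components dilation formula via Remark \ref{REM:Dilation}, is routine.
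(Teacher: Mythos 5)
Your proposal is correct and follows essentially the same route as the paper's own proof: the identical triangle-inequality split through $S_\lambda X$ and $R_\theta S_\lambda X$, with Proposition \ref{PROP:TranslationW2} handling the translation leg, Remark \ref{REM:Dilation} (correctly invoked in its correlated-components form, since $\lambda_1,\lambda_2>0$) handling the dilation leg, and the equality hypothesis turning Theorem \ref{THM:RotationW2} with $\varphi=0$ into an exact evaluation of the rotation leg after substituting the moments of $S_\lambda X$. Your observation that the equality assumption is indispensable because the rotation bound is one-sided is exactly the point the paper's construction relies on.
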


\begin{proof}
Note that \begin{align*}W_2(T_\alpha R_\theta S_\lambda X,X) & \leq W_2(S_\lambda X,X) + W_2(R_\theta S_\lambda X,S_\lambda X) + W_2(T_\alpha R_\theta S_\lambda X,R_\theta S_\lambda X) \\ & =: I_1+I_2+I_3.\end{align*}

By Proposition \ref{PROP:TranslationW2}, $I_3 = |\alpha|$, and $I_1 = \sqrt{(\lambda_1-1)^2\E[X_1^2]+(\lambda_2-1)^2\E[X_2^2]}$ by Proposition \ref{PROP:DilationW2} and Remark \ref{REM:Dilation}.
 
Finally, to estimate $I_2$, we apply Theorem \ref{THM:RotationW2} with $\varphi=0$ and $X$ replaced with $S_\lambda X$. Noting that $S_\lambda X = \begin{bmatrix} \lambda_1X_1 \\ \lambda_2X_2\end{bmatrix}$, $\E[(S_\lambda X)_i] = \lambda_i\E[X_i]$, $i=1,2$, and $\E[(S_\lambda X)_1(S_\lambda X)_2] = \lambda_1\lambda_2\E[X_1X_2]$, we have
\begin{multline*}I_2^2 = 2\left(\lambda_1^2(\E[X_1])^2 +
\lambda_2^2(\E[X_2])^2 \right)(1 - \cos\theta) +  2(\lambda_1^2\Var[X_1] + \lambda_2^2\Var[X_2]) \\ - 2\left\{((\lambda_1^2\Var[X_1] - \lambda_2^2\Var[X_2])^2+4\lambda_1^2\lambda_2^2(\E[X_1X_2])^2) \cos^2(\theta) \right. \\ + \left.4\lambda_1^2\lambda_2^2(\Var[X_1]\Var[X_2]-(\E[X_1X_2])^2)\right\}^\frac12.\end{multline*}
Combining the estimates above yields the result.
\end{proof}

\begin{theorem}\label{THM:CompositionW2UpperBound}
With the setup of Theorem \ref{THM:CompositionW2}, suppose that equality does not necessarily hold in Theorem \ref{THM:RotationW2}
\begin{multline*}
    W_2(T_\alpha\circ R_\theta\circ S_\lambda X,X) \leq |\alpha| + \sqrt{(\lambda_1-1)^2\E[X_1^2]+(\lambda_2-1)^2\E[X_2^2]} \\ + \left\{2(\lambda_1^2\E[X_1]^2+\lambda_2^2\E[X_2]^2)(1-\cos\theta)+2(\lambda_1^2\Var[X_1]+\lambda_2^2\Var[X_2]) \right. \\
\left.  + 2\left[((\lambda_1^2\Var[X_1] - \lambda_2^2\Var[X_2])^2+4\lambda_1^2\lambda_2^2(\E[X_1X_2])^2) \cos^2(\theta)  \right. \right. \\ \left. \left. + 4\lambda_1^2\lambda_2^2(\Var[X_1]\Var[X_2]-(\E[X_1X_2])^2)\right]^\frac12
\right\}^\frac12
\end{multline*}
\end{theorem}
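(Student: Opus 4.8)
The plan is to follow the proof of Theorem \ref{THM:CompositionW2} almost verbatim, changing only the single step that estimates the rotation term. As there, I would start from the triangle inequality for $W_2$ to split
\[W_2(T_\alpha R_\theta S_\lambda X, X) \leq I_1 + I_2 + I_3,\]
where $I_1 = W_2(S_\lambda X, X)$, $I_2 = W_2(R_\theta S_\lambda X, S_\lambda X)$, and $I_3 = W_2(T_\alpha R_\theta S_\lambda X, R_\theta S_\lambda X)$. The outer two terms are untouched by dropping the equality hypothesis: Proposition \ref{PROP:TranslationW2} gives $I_3 = |\alpha|$ exactly, and Proposition \ref{PROP:DilationW2} together with Remark \ref{REM:Dilation} (this is where $\lambda_1,\lambda_2>0$ is used, so that correlated components are permitted) gives $I_1 = \sqrt{(\lambda_1-1)^2\E[X_1^2]+(\lambda_2-1)^2\E[X_2^2]}$. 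Thus all of the work reduces to bounding $I_2$.

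The only point of departure from Theorem \ref{THM:CompositionW2} is that, without assuming equality in Theorem \ref{THM:RotationW2}, I can no longer use the exact minus-sign formula for $I_2^2$. Instead I would invoke the companion upper bound of Dowson and Landau recorded after Theorem \ref{THM:FrechetBounds} (the formula with $-$ replaced by $+$ inside the trace), namely $\E|U-V|^2 \leq \tr[\Sigma_U + \Sigma_V + 2(\Sigma_U\Sigma_V)^\frac12]$ for zero-mean $U,V$. Combined with the mean/covariance splitting of Proposition \ref{PROP:W2MinExp}, this yields
\[I_2^2 \leq |m_{R_\theta S_\lambda X} - m_{S_\lambda X}|^2 + \tr[\Sigma_{R_\theta S_\lambda X} + \Sigma_{S_\lambda X} + 2(\Sigma_{R_\theta S_\lambda X}\Sigma_{S_\lambda X})^\frac12].\]

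Every ingredient on the right is already computed for the vector $S_\lambda X$, whose components have means $\lambda_i\E[X_i]$, variances $\lambda_i^2\Var[X_i]$, and cross-moment $\lambda_1\lambda_2\E[X_1X_2]$. Applying Proposition \ref{PROP:RotationMean} to $S_\lambda X$ with $\varphi=0$ produces the mean term $2(\lambda_1^2\E[X_1]^2+\lambda_2^2\E[X_2]^2)(1-\cos\theta)$; Proposition \ref{PROP:RotationTrace} gives $\tr[\Sigma_{R_\theta S_\lambda X}+\Sigma_{S_\lambda X}] = 2(\lambda_1^2\Var[X_1]+\lambda_2^2\Var[X_2])$ since rotation preserves the trace of the covariance; and Proposition \ref{PROP:RotationTraceSquareRoot} evaluates the remaining $2\tr[(\Sigma_{R_\theta S_\lambda X}\Sigma_{S_\lambda X})^\frac12]$ term, which is exactly the square-root expression in the statement, now added rather than subtracted. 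Substituting these three evaluations back into the triangle inequality gives the claimed bound.

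The only genuinely conceptual point — and hence the one I would be careful about — is the legitimacy of the Dowson--Landau upper bound as a bound on $W_2^2$, which is a \emph{minimum} over couplings, rather than on the worst coupling. This is immediate and requires no real obstacle to overcome: the independent coupling of $S_\lambda X$ and $R_\theta S_\lambda X$ already yields $W_2^2 \leq |m_{R_\theta S_\lambda X} - m_{S_\lambda X}|^2 + \tr[\Sigma_{R_\theta S_\lambda X}+\Sigma_{S_\lambda X}]$, and since $(\Sigma_{R_\theta S_\lambda X}\Sigma_{S_\lambda X})^\frac12$ is positive semidefinite its trace is nonnegative, so the plus-sign expression merely enlarges an already valid upper bound. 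Consequently the proof is a one-line modification of Theorem \ref{THM:CompositionW2}, the sole structural change being the replacement of the exact rotation distance by its Bures upper bound.
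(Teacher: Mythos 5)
Your proof is correct and takes essentially the same approach as the paper: the paper offers no separate argument for Theorem \ref{THM:CompositionW2UpperBound}, remarking only that it differs from Theorem \ref{THM:CompositionW2} by replacing the exact rotation term $I_2$ with the Dowson--Landau upper bound $\E|X-Y|^2\leq\tr[\Sigma_X+\Sigma_Y+2(\Sigma_X\Sigma_Y)^{\frac12}]$, which is precisely your single modification. Your closing observation that the independent coupling (plus nonnegativity of $\tr[(\Sigma_X\Sigma_Y)^{\frac12}]$, via Proposition \ref{PROP:TraceNormEquivalence}) legitimizes using this as a bound on the minimum over couplings soundly justifies a step the paper leaves implicit.
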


The difference in the bounds of Theorems \ref{THM:CompositionW2} and \ref{THM:CompositionW2UpperBound} is merely the $+$ sign in the final term by using the upper bound of Dowson and Landau \cite{dowson1982frechet} that $\E|X-Y|^2\leq \tr[\Sigma_X+\Sigma_Y+2(\Sigma_X\Sigma_Y)^\frac12]$. This upper bound is typically far from optimal as can be seen in the examples above given that the lower bound is such a close approximation. Nonetheless it does give a qualitative upper bound for composition maps which is small for sufficiently small angles and dilations.

\begin{remark}
Note that one can interchange the order of any of the three operations (translation, scaling, rotation) in the theorems above and derive slightly different bounds. In applications such as handwriting analysis as discussed in the next section, putting translation last seems to make the most sense. However, the order of scaling and rotation could differ. For space consideration, we leave the version of Theorem \ref{THM:CompositionW2UpperBound} with scaling and rotation reversed to the interested reader, but mention that the difference is that the term $I_1$, now in terms of rotation, becomes merely that of Theorem \ref{THM:RotationW2}, while the middle term $I_2$ contains the variances $\E(R_\theta X)_1^2] = \E[X_1^2]\cos^2\theta+\E[X_2^2]\sin^2\theta-2\E[X_1X_2]\cos\theta\sin\theta$ and the similar term for the second component. 
\end{remark}

\begin{example}\label{EX:GaussianComposition}
Consider a bivariate zero-mean Gaussian random vector $X\sim \mathcal{N}(0,\Sigma)$ in $\R^2$. To illustrate the tightness of the upper bound of Theorem \ref{THM:CompositionW2} and how it may depend on correlation of the components of $X$, we generate 3 sets of 1,000 samples from three different zero-mean Gaussian distributions having covariance matrices
\[\Sigma_1 = \begin{bmatrix}8 & 4 \\ 4 & 4\end{bmatrix},\quad \Sigma_2 = \begin{bmatrix}8 & 2 \\ 2 & 4\end{bmatrix}, \quad \Sigma_3 =
 \begin{bmatrix}8 & 0 \\ 0 & 4\end{bmatrix}.\]
That is, we draw from Gaussians with highly correlated, moderately correlated, and uncorrelated components.

We set the translation vector $\alpha=0$, and the scaling vector $\lambda = \begin{bmatrix} \frac12 \\ 1\end{bmatrix}.$ Figure \ref{FIG:GaussianComposition} shows both the action of $T_\alpha\circ S_\lambda\circ R_\theta X$ on $X\sim\mathcal{N}(0,\Sigma_1)$ and the relative error (upper bound $- W_2(T_\alpha\circ R_\theta\circ S_\lambda X,X))/W_2(T_\alpha\circ R_\theta\circ S_\lambda X,X)$ for $X\sim(0,\Sigma_i)$, $i=1,2,3$ using the upper bound of Theorem \ref{THM:CompositionW2}. Error bars show one standard deviation of the error over 20 repeated trials. 

\begin{figure}[h!]
\centering
\includegraphics[width=\textwidth]{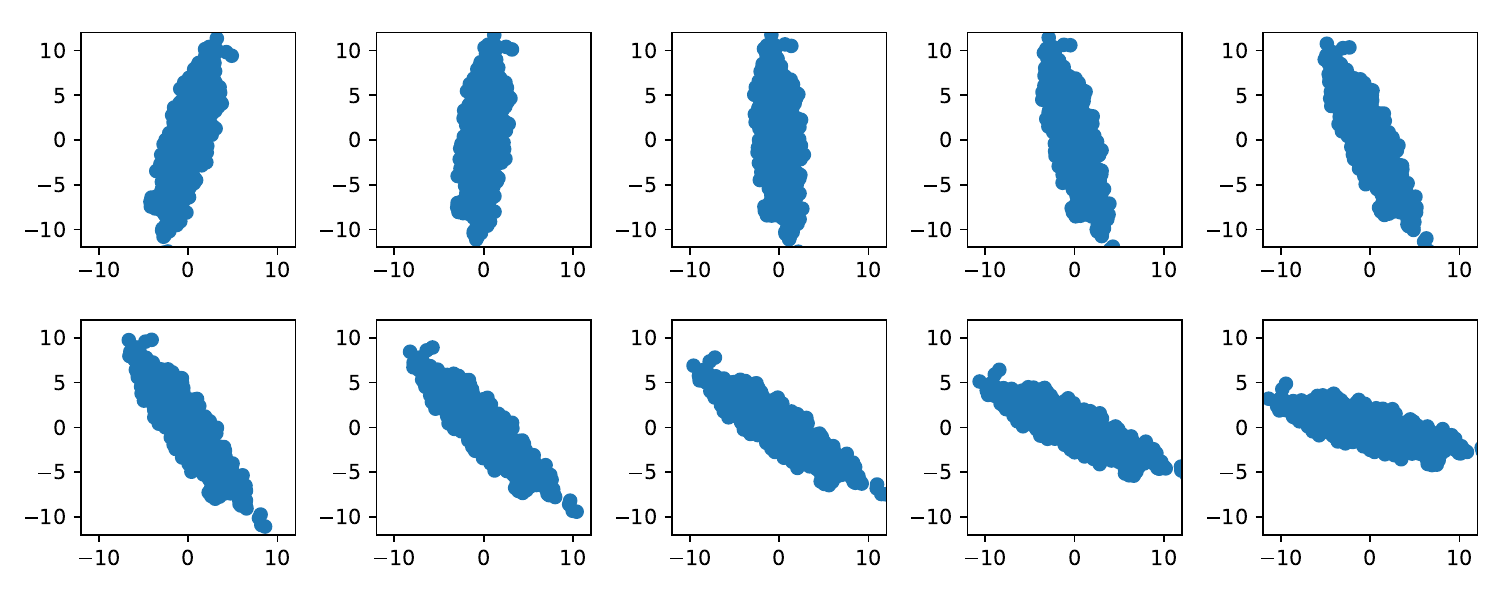}
\includegraphics[width=\textwidth]{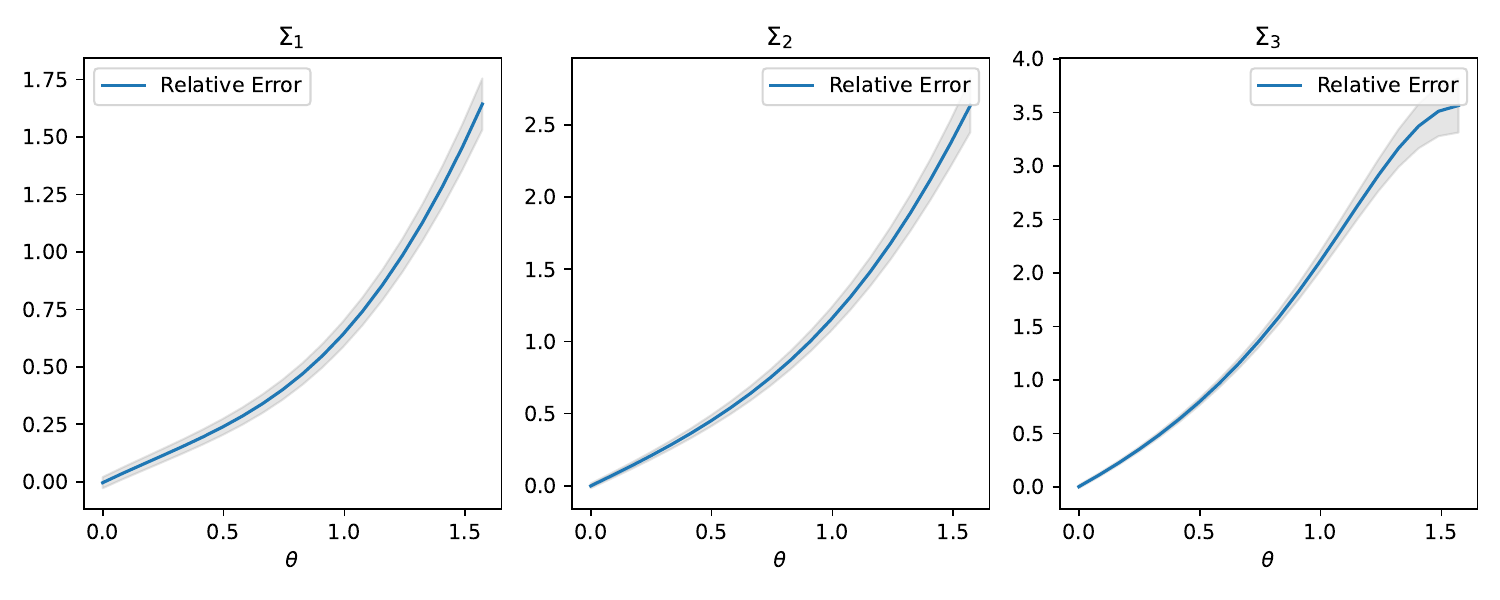}
\caption{Figure illustrating Example \ref{EX:GaussianComposition}. (Top) Sample of $X\sim\mathcal{N}(0,\Sigma_1)$ under the map $T_\alpha\circ R_\theta\circ S_\lambda$ for $\theta\in[0,\frac{\pi}{2}]$ with $\alpha = 0$, $\lambda = \begin{bmatrix} \frac12 \\ 2 \end{bmatrix}$. (Bottom) Relative error (upper bound $- W_2(T_\alpha\circ R_\theta\circ S_\lambda X,X))/W_2(T_\alpha\circ R_\theta\circ S_\lambda X,X)$ for $X\sim(0,\Sigma_i)$, $i=1,2,3$ using the upper bound of Theorem \ref{THM:CompositionW2} (which is valid because Gaussians satisfy equality in Theorem \ref{THM:RotationW2}).}\label{FIG:GaussianComposition}
\end{figure}

Intriguingly, the upper bound leads to smaller relative error for highly correlated Gaussians and larger error for uncorrelated Gaussians. However, with more experimentation, we find that this is not a general phenomenon. Indeed, when the scaling parameter is changed to $\lambda = \begin{bmatrix}2 \\ \frac12 \end{bmatrix}$ with the same covariance matrices, the relative error is slightly smaller for the uncorrelated Gaussian than the correlated ones. In all, this suggests that there may be a delicate interplay between the data distribution, the composition parameters, and the relative error of the upper bound which could be worthy of further experimentation.
\end{example}

\section{Applications to 1-dimensional submanifolds}

Now we will present a number of illustrative examples of Wasserstein distance under composition 
of scaling/rotation/translation for uniform distributions on a unit circle, line segment, and other curves. Such examples naturally arise in the context of alphabet letters recognition subject to affine distortions, especially when considering small perturbations from the actual objects.

\begin{example}\label{EX:CircleComposition}
    Consider a uniform measure on a unit circle, which can be realized as follows $(\Omega,F,P) = ([0,2\pi],\mathcal{B}([0,2\pi]),\frac{dt}{2\pi})$, where $\frac{dt}{2\pi}$ is the Lebesgue measure normalized to be a probability measure on $[0,2\pi]$, i.e., the uniform distribution with density $\frac{1}{2\pi}\one_{[0,2\pi]}(t).$  Consider a random vector \[X = \begin{bmatrix}X_1\\X_2\end{bmatrix}:\Omega\to S^1,\quad \textnormal{via}\quad X(t) = \begin{bmatrix} X_1(t)\\ X_2(t)\end{bmatrix} = \begin{bmatrix}\cos t\\\sin t\end{bmatrix}, t\in[0,2\pi].\]
    Then $X(t)$ is uniformly distributed over the unit circle $S^1$.

    Note that since $\int_0^{2\pi}\cos tdt = \int_0^{2\pi}\sin tdt = 0$, we have $\E[X]= 0.$ Similarly, $\E[X_1X_2]=0$, as $\int_0^{2\pi}\cos t\sin tdt = 0$, hence $X_1$ and $X_2$ are uncorrelated. Finally, 
    \[\Var[X_1] = \E[X_1^2] = \frac{1}{2\pi}\int_0^{2\pi}\cos^2tdt = \frac12 = \frac{1}{2\pi}\int_0^{2\pi}\sin^2tdt = \E[X_2^2] = \Var[X_2].\]
     Note that in this instance, $X$ satisfies the criteria for equality of the lower bound of Corollary \ref{COR:W2TraceEquality} (see Gelbrich \cite{gelbrich1990formula}). 
    Consequently, the conclusion of Theorem \ref{THM:CompositionW2} yields
    \begin{equation}\label{EQN:CircleCompositionBound}W_2(T_\alpha\circ R_\theta\circ S_\lambda X,X)\leq |\alpha|+\frac{1}{\sqrt{2}}|\lambda-\mathbf{1}| + \left(|\lambda|^2-\sqrt{(\lambda_1^2-\lambda_2^2)^2\cos^2\theta+4\lambda_1^2\lambda_2^2}\right)^\frac12.\end{equation}

    Note that this bound is more useful for small values of $\alpha,\theta,\lambda$, but nonetheless gives an explicit upper bound in terms of these parameters. Figure \ref{FIG:CircleLine} illustrates this example for 50 samples of the unit circle and 50 angles $\theta\in[0,2\pi)$ with $\alpha = \begin{bmatrix}-\frac12 \\ 1\end{bmatrix}$, and $\lambda = \begin{bmatrix} 2 \\ \frac12 \end{bmatrix}$. In this case, we see the symmetric behavior in the bound.


\begin{figure}[h!]
\centering
\includegraphics[width=.45\textwidth]{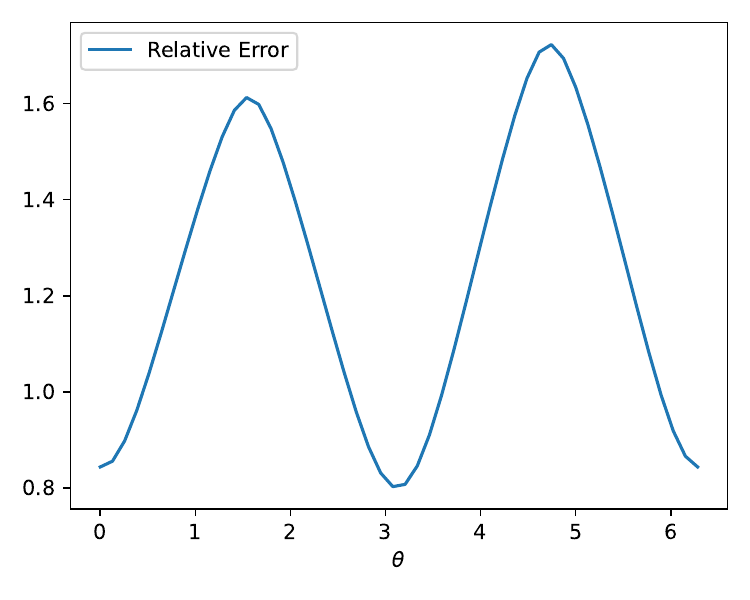}
\includegraphics[width=.45\textwidth]{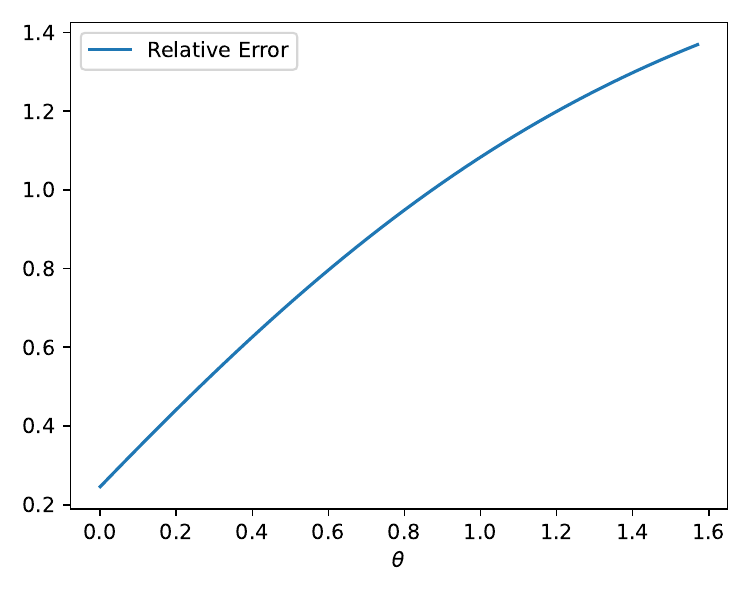}
\caption{(Left) $X$ drawn from the uniform distribution on the unit circle in $\R^2$ as in Example \ref{EX:CircleComposition}. Shown is the relative error between the Wasserstein distance and the upper bound. (Right) Relative error between Wasserstein distance and upper bound for $X$ drawn from the uniform distribution on $[-\frac12,\frac12]$ in $\R^2$ as in Example \ref{EX:LineComposition}. }\label{FIG:CircleLine}
\end{figure}


\end{example}

\begin{example}\label{EX:LineComposition}
    Let $X = \begin{bmatrix}X_1\\X_2\end{bmatrix} = \begin{bmatrix}X_1\\0\end{bmatrix}$, where $X$ is uniformly distributed on $[-\frac12,\frac12]$ with density $\one_{[-\frac12,\frac12]}(t).$ Then $\E[X_1]=\E[X_2]=0$, $\Var[X_1] = \frac{1}{12}$, $\Var[X_2] = 0$.  One can readily verify in this case that $\E[|X-R_\theta X|^2 = \E[X_1^2(2(1-\cos\theta))]=\frac16(1-\cos\theta)$. Additionally, one can compute the trace quantity in the lower bound of \eqref{EQN:W2MeanTraceNorm} directly and see that it is $\frac16(1-\cos\theta)$. Therefore, we may use Theorem \ref{THM:CompositionW2}, which implies that
    \begin{equation}\label{EQN:LineCompositionUpperBound}W_2(T_\alpha\circ R_\theta\circ S_\lambda X,X)\leq |\alpha|+\frac{1}{\sqrt{12}}|\lambda_1-1| + \frac{1}{\sqrt{3}}|\lambda_1|^2\sin\frac{\theta}{2}.\end{equation}
    Note that in deriving the bound above from Theorem \ref{THM:CompositionW2}. we have used the fact that $\sqrt{1-\cos\theta} = \sqrt{2}\sin\frac{\theta}{2}$.

Since $X$ is drawn from a 1-dimensional subset of $\R^2$, it makes sense that only dilation in the $x$-direction will impact the upper bound \eqref{EQN:LineCompositionUpperBound} (since dilation is the first operation). The upper bound here is evidently good for small translations and dilations as in Example \ref{EX:CircleComposition}, but will not be for large translation or dilation. Figure \ref{FIG:CircleLine} shows a simple example of this upper bound for $\alpha = \begin{bmatrix}-.25\\.1\end{bmatrix}$ and $\lambda = \begin{bmatrix} 2 \\ \frac12 \end{bmatrix}$, in which 50 samples are drawn from $\one_{[-\frac12,\frac12]}$ and 50 values of $\theta\in[0,\frac{\pi}{2}]$ are used.


\end{example}

\subsection{Toward handwriting analysis}

In the following examples we illustrate construction of uniform distributions over shapes represented by random curves with uncorrelated components, whereby satisfying the assumptions for Wasserstein distance formulas developed in this paper. 

We focus on deriving random vectors $X\in\R^2$ which could represent handwritten letters with the notion that these could be used in conjunction with the bounds in the previous sections to better understand Wasserstein distances between images in handwriting datasets like MNIST \cite{lecun1998mnist}. This has potential utility in a variety of avenues including clustering (e.g., \cite{Chen:2017aa}) or manifold learning. In particular, several recent works have explored the use of optimal transport in manifold learning (i.e., learning submanifolds of $W_2(\R^d)$) \cite{cloninger2023linearized,hamm2022wassmap,liu2022wasserstein,mathews2019molecular,negrini2023applications,wang2010optimal}. 

Utilizing the constructions below as well as template deformations such as the composition maps in Section \ref{SEC:Composition}, one could develop a synthetic handwritten image dataset for testing in machine learning applications. A preliminary example of this was done for handwritten $0$'s in \cite{hamm2022wassmap}.

Note that the simplest case for utilizing the theory above is when letters are generated using random vectors in $\R^2$ with uncorrelated components. As noted above, this simplifies some of the bounds considerably. However, this is not a requirement as seen in Theorem \ref{THM:RotationW2}, and therefore one has significant flexibility in producing curves to represent letters.




\begin{example}[Letter C]\label{EX:LetterC}

Let $(\Omega,F,P) = ([\frac\pi2,\frac{3\pi}{2},\mathcal{B}([\frac\pi2,\frac{3\pi}{2}]),\frac{dt}{\pi})$, and let
\[X(t) = \begin{bmatrix} X_1(t)\\X_2(t)\end{bmatrix} = \begin{bmatrix} \cos t+\frac2\pi\\ \sin t\end{bmatrix},\quad t\in\left[\frac\pi2,\frac{3\pi}{2}\right].\]
$X$ is the semicircle of radius 1 centered at $(\frac2\pi,0)$, and one can readily check that $\E[X_1]=\E[X_2]=\E[X_1X_2]=0$.

\begin{figure}[h!]
\centering
\includegraphics[width=.8\textwidth]{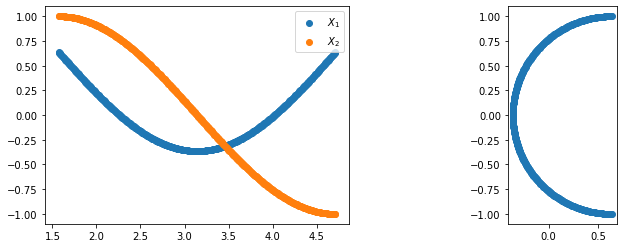}
\caption{(Left) $X_1$ and $X_2$ vs. $t$. (Right) $X = \begin{bmatrix}X_1\\X_2\end{bmatrix}$ forming the letter C.}
\end{figure}
\end{example}

\begin{example}[Letter A]\label{EX:LetterA}

Let $(\Omega, F, P) = ([0,6],\mathcal{B}([0,6]),\frac{dt}{6})$, and consider $X$ defined as $X(t) = \begin{bmatrix} X_1(t)\\ X_2(t)\end{bmatrix}$, $t\in[0,6]$, where
\[X_1(t) = \begin{cases}
    2-t,& 0\leq t\leq4\\
    t-5,& 4<t\leq 6,
\end{cases}\qquad X_2(t) = \begin{cases}
    t-1,& 0\leq t\leq2\\
    3-t,& 2<t\leq4\\
    0,& 4<t\leq6.
\end{cases}\]

Here $\E[X_1]=\E[X_2]=\E[X_1X_2]=0$, and $\E[X_1^2] = 1$, $\E[X_2^2]=\frac29.$

\begin{figure}[h!]
\centering
\includegraphics[width=.8\textwidth]{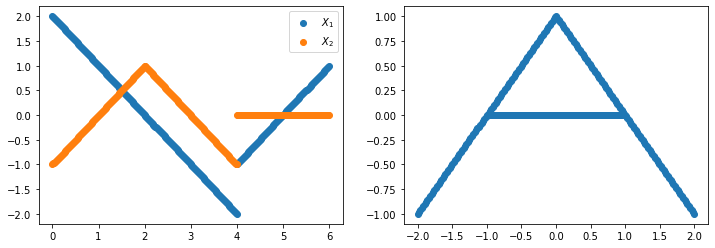}
\caption{(Left) $X_1$ and $X_2$ vs. $t$. (Right) $X = \begin{bmatrix}X_1\\X_2\end{bmatrix}$ forming the letter A.}\label{FIG:ARotation}
\end{figure}

For comparison with simpler distributions above, we illustrate rotations of the letter A here. From the above quantities, the lower bound according to Theorem \ref{THM:RotationW2} for a rotated A is
\[W_2(X,R_\theta X)^2 \geq \frac{22}{9}-2\sqrt{\frac{49}{81}\cos^2\theta+\frac89}.\]

Interestingly, we see from Figure \ref{FIG:ARotation} that the relative error for comparing the letter A with its rotated copy is similarly small when compared to those of Figures \ref{FIG:01Rotation} and \ref{FIG:RectangleRotation} for rotated cubes and rectangles.

\begin{figure}[h!]
\centering
\includegraphics[width=\textwidth]{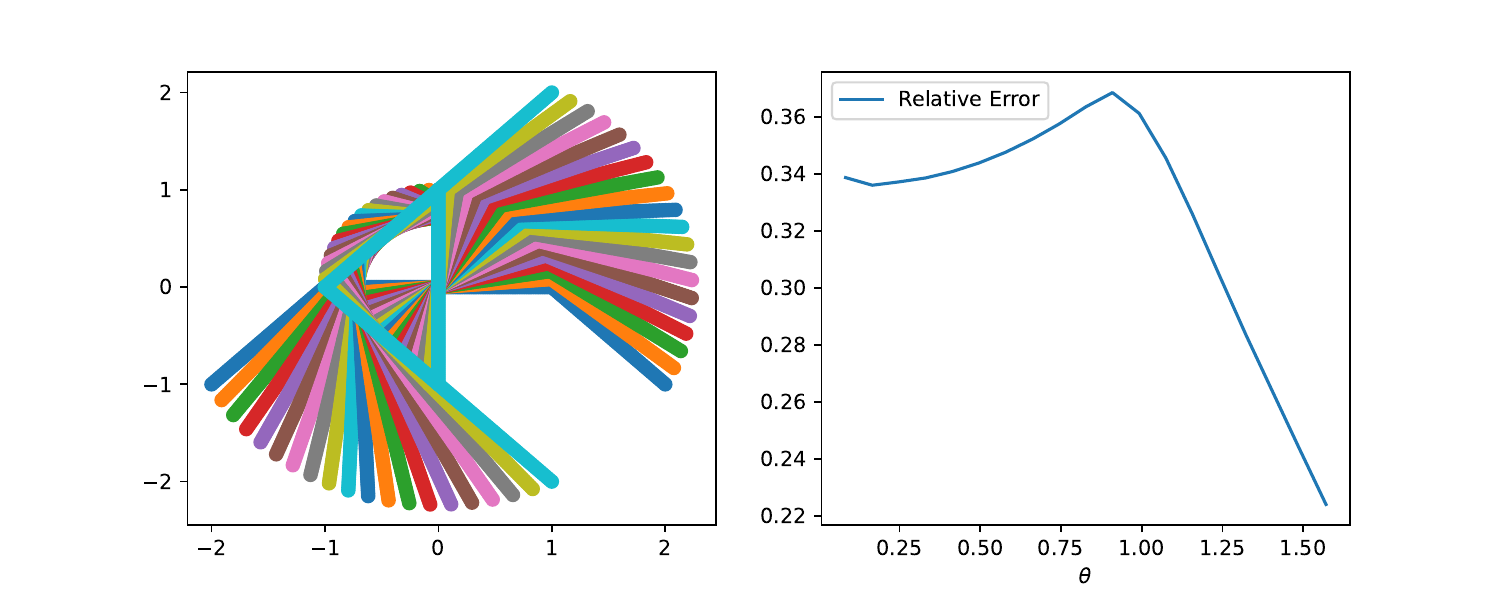}
\caption{(Left) Letter A undergoing rotation. (Right) Relative error $(W_2(R_\theta X,X)-$lower bound)$/W_2(R_\theta X,X)$ for the lower bound of Theorem \ref{THM:RotationW2}.}
\end{figure}

\end{example}

\begin{example}[Letter T]
The examples above are of letters whose components are mean 0 and uncorrelated. Here, we consider two ways of making the letter T, with and without these restrictions. We then consider the relative error for the corresponding lower bound of rotated T's.

Let $(\Omega,F,P) = ([0,4],\mathcal{B}([0,4]),\frac{dt}{4})$ with 
\begin{equation}\label{EQN:T1}X_1(t) = \begin{cases} 1-t,& 0\leq t\leq 2\\
0,& 2<t\leq 4,\end{cases},\qquad X_2(t) = \begin{cases}
    \frac12,& 0\leq t\leq2\\
    t-\frac72,& 2<t\leq4.
\end{cases}\end{equation}
    It is readily verified that $\E[X_1]=\E[X_2]=\E[X_1X_2]=0.$

A second parametrization is
\begin{equation}\label{EQN:T2}X_1(t) = \begin{cases} 1-t,& 0\leq t\leq 2\\
0,& 2<t\leq 4,\end{cases},\qquad X_2(t) = \begin{cases}
    1,& 0\leq t\leq2\\
    t-3,& 2<t\leq4.
\end{cases}\end{equation}
One can compute $\E[X_1] = 0$, $\E[X_2] = \frac12$, $\E[X_1^2] = \frac16$, $\E[X_2^2] = \frac43.$

Figure \ref{FIG:T} shows both drawings of the letter T using these parametrizations. To make the distribution mean 0, the first parametrization makes the T non-centered, i.e., it starts at a height of -1.5 and the top is at a height of .5. The second parametrization is designed so that the top of the T is at a height of 1 and the bottom is at a height of -1, and the width of the T is also 2.

\begin{figure}[h!]
\centering
\includegraphics[width=.8\textwidth]{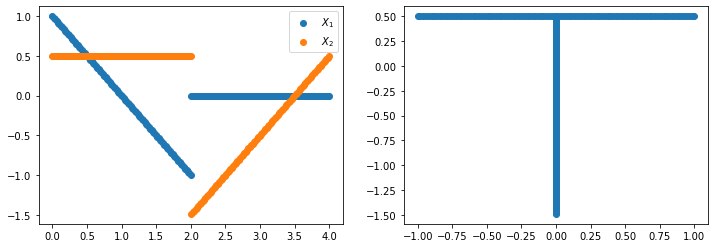}
\includegraphics[width=.97\textwidth]{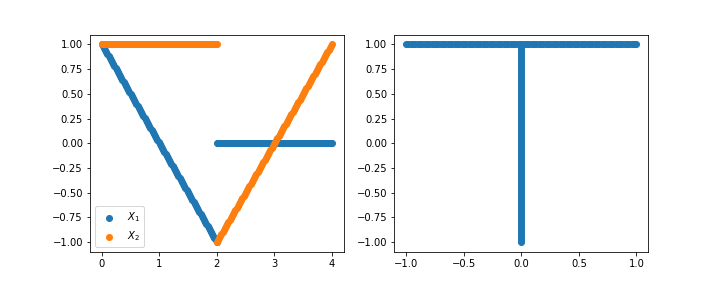}
\caption{(Left) $X_1$ and $X_2$ vs. $t$ for the parametrization \eqref{EQN:T1} (top) and \eqref{EQN:T2} (bottom). (Right) $X = \begin{bmatrix}X_1\\X_2\end{bmatrix}$ forming the letter T.}\label{FIG:T}
\end{figure}
\end{example}

Now, Figure \ref{FIG:TRotation} shows the letter T under rotation by $\theta\in[0,\frac{\pi}{2}]$ and the relative error with the lower bound of Theorem \ref{THM:RotationW2} for both parametrizations of the letter.

\begin{figure}[h!]
\centering
\includegraphics[width=.6\textwidth]{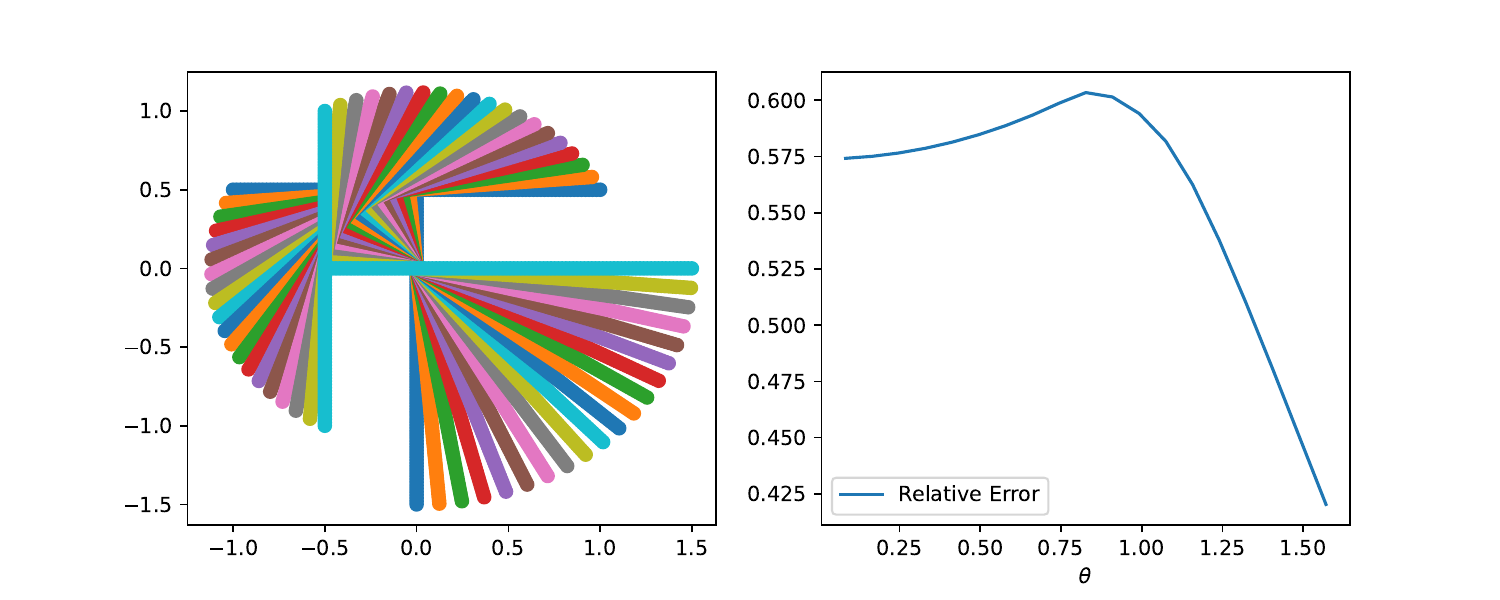} \hspace{-2em}\includegraphics[width=.3\textwidth]{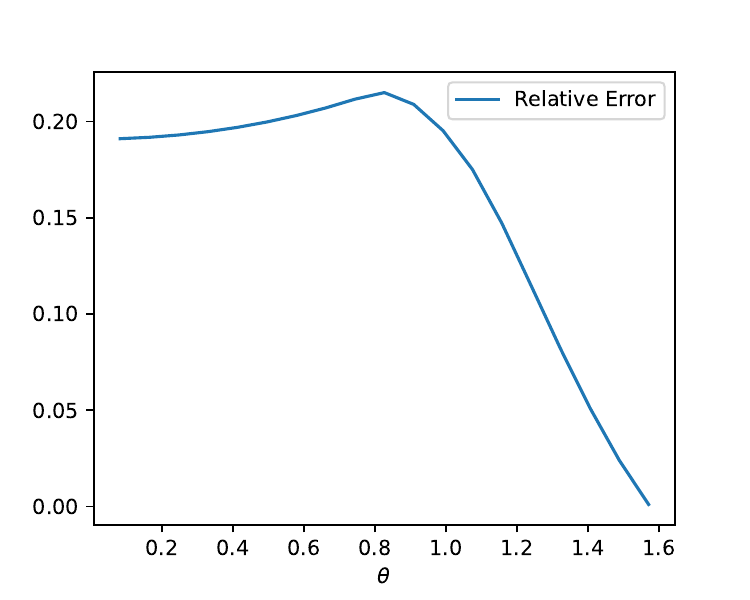}
\caption{(Left) Letter T undergoing rotation. (Center) Relative error $(W_2(R_\theta X,X)-$lower bound)$/W_2(R_\theta X,X)$ for the lower bound of Theorem \ref{THM:RotationW2} for the T parametrized by \eqref{EQN:T1} (Right) Relative error for the T with nonzero mean parametrized by \eqref{EQN:T2}.}\label{FIG:TRotation}
\end{figure}

One sees that more symmetry in the letter parametrized by \eqref{EQN:T2} yields a smaller relative error with the lower bound. In general, this may suggest that if one is to design synthetic experiments or datasets for handwritten digits, care must be taken in how it is done, as seemingly small choices such as what domain the curves are generated in, how the letters are centered in the image domain, and symmetry, among other factors, may have a significant impact in the Wasserstein geometry of sets of deformations of such letters designed to mimic differences of handwriting between people.

\subsection{Wassmap embeddings}

Some dimensionality reduction algorithms such as multidimensional scaling (MDS) \cite{mardia1979multivariate} and Isomap \cite{tenenbaum2000global} involve computing a square distance matrix to capture local geometry of data. In particular, MDS requires a matrix of the form $D_{ij} = d(x_i,x_j)^2$ where $d$ is some metric on the data space and $x_i$ are the given data. The truncated singular value decomposition of a centered version of this square distance matrix is used to embed the data into a lower-dimensional Euclidean space.

Classical MDS used Euclidean distance as the metric $d$, but several works have studied these embeddings when $d$ is the Wasserstein metric $W_2$ \cite{cloninger2023linearized,hamm2022wassmap,negrini2023applications,wang2010optimal}. MDS or Isomap with Wasserstein distance matrices is referred to as Wassmap in \cite{hamm2022wassmap}. We refer the interested reader to these works for detailed descriptions of these algorithms and their particulars.  For a small demonstration on the synthetic letter T, we consider the Wassmap embedding of rotations of T parametrized by both \eqref{EQN:T1} and \eqref{EQN:T2} above. We also show the MDS embedding in which $D_{ij}$ is the value of the lower bound of Theorem \ref{THM:RotationW2} for comparison. Results are shown in Figure \ref{FIG:Wassmap}.

\begin{figure}[h!]
\centering
\includegraphics[width=\textwidth]{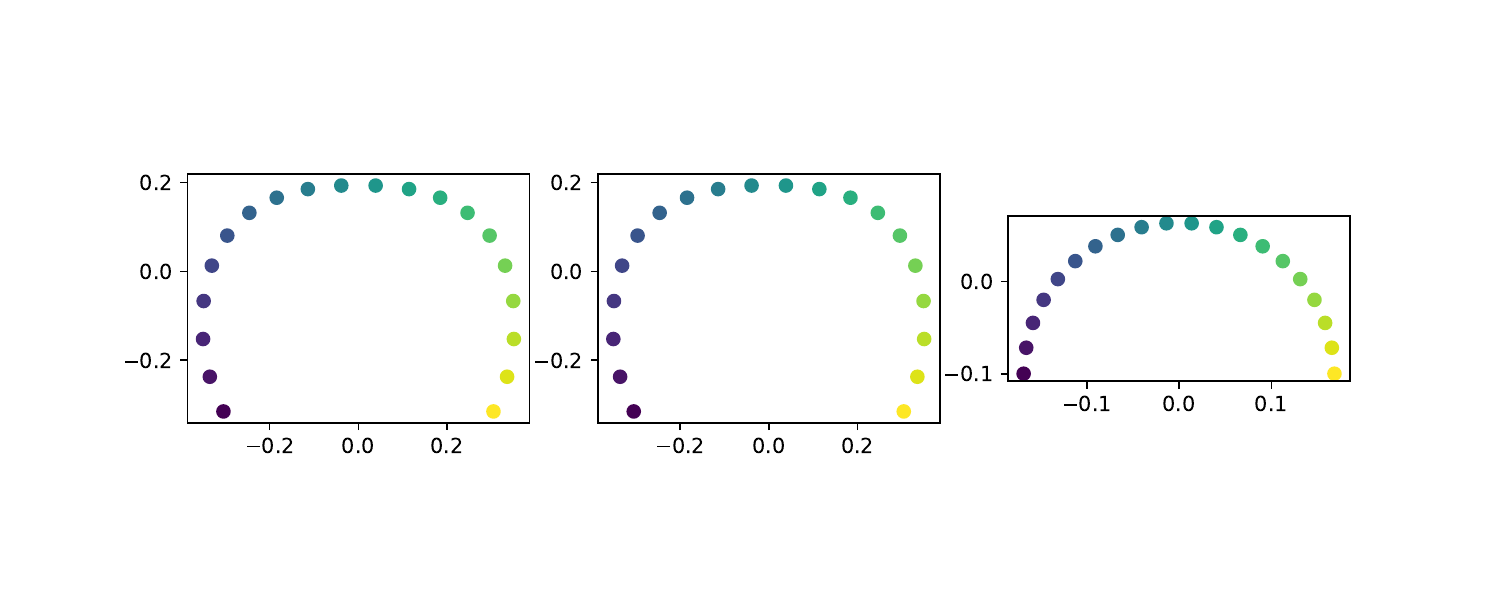}
\caption{(Left) Wassmap embedding of rotated copies of letter T parametrized by \eqref{EQN:T1}. (Center) Same for T parametrized by \eqref{EQN:T2}.  (Right) MDS embedding using the lower bound of Theorem \ref{THM:RotationW2} for T parametrized by \eqref{EQN:T1}.}\label{FIG:Wassmap}
\end{figure}

Experiments in \cite{hamm2022wassmap} show that in some cases, the rotational structure of a set of data generated by rotating a single initial image can be uncovered by the embedding which appears to be part of a circle. Here, we see that this structure does indeed appear, and in fact the embedding of both parametrizations of the letter T appear essentially indistinguishable. Further experimentation needs to be done in the future, but it would be interesting to better understand the nuances of how image geometry and Wasserstein distance between deformations of images impact dimensionality reduction embeddings.

\section{Conclusion}

We have explored some lower and upper bounds of Wasserstein distances between random vectors and translations, rotations, dilations, and compositions of these. Our analysis is based on the bounds of Dowson and Landau \cite{dowson1982frechet} and Gelbrich \cite{gelbrich1990formula}. In particular, our concrete bounds are in terms of mean, variance, and covariance of the given random vectors, which are easily computed or approximated.

We have illustrated the quality of these bounds for a variety of random vectors including correlated and uncorrelated Gaussians, 1-dimensional curves, and imitations of handwritten digits.  Finally, we briefly consider some downstream applications to dimensionality reduction of handwritten digits. Our final experiments raise a number of questions to be considered in the future, including
\begin{itemize}
\item What is the effect of centering and symmetry of handwritten letters on analyzing a synthetic handwritten letter dataset?
\item What is the effect of these things on dimensionality reduction embeddings of such a synthetic dataset?
\item Are there other factors of images that substantially affect the quality of the lower bounds considered here?
\end{itemize}



\section*{Acknowledgements}

KH was partially supported by a Research Enhancement Program grant from the College of Science at the University of Texas at Arlington. Research was sponsored by the Army Research Office and was accomplished under Grant
Number W911NF-23-1-0213. The views and conclusions contained in this document are those of the authors and
should not be interpreted as representing the official policies, either expressed or implied, of the Army Research
Office or the U.S. Government. The U.S. Government is authorized to reproduce and distribute reprints for
Government purposes notwithstanding any copyright notation herein.

The authors thank the reviewers for valuable suggestions that improved the results and presentation of this manuscript.

\bibliographystyle{plain}


\end{document}